\documentclass[letterpaper]{article} 
\usepackage{aaai21}  
\usepackage{times}  
\usepackage{helvet} 
\usepackage{courier}  
\usepackage[hyphens]{url}  
\usepackage{graphicx} 
\urlstyle{rm} 
\usepackage{natbib}  
\usepackage{caption} 
\frenchspacing  
\setlength{\pdfpagewidth}{8.5in}  
\setlength{\pdfpageheight}{11in}  
\usepackage{algorithmic}
\usepackage{appendix}
\usepackage[ruled, linesnumbered,vlined]{algorithm2e}
\usepackage{amsmath}
\usepackage{amssymb}
\usepackage{amsthm}
\usepackage{array}
\usepackage{bbding}
\usepackage{booktabs}
\usepackage{color}
\usepackage{epstopdf}
\usepackage{float}
\usepackage{graphicx}
\usepackage{subfigure}
\usepackage{multicol}
\usepackage{multirow}
\usepackage{wrapfig}
\usepackage{url}
\usepackage{indentfirst}
\usepackage{delarray}
\usepackage{tabularx}
\usepackage{textcomp}
\usepackage{mathrsfs}
\usepackage{bm}
\usepackage{setspace}
\usepackage{makecell}
\usepackage{pifont}
\usepackage{tikz}
\newcommand*{\circled}[1]{\lower.7ex\hbox{\tikz\draw (0pt, 0pt)%
		circle (.5em) node {\makebox[1em][c]{\small #1}};}}
\usepackage{textcomp}
\usepackage[T1]{fontenc} 
\usepackage{fancyvrb,cprotect}
\newtheorem{theorem}{Theorem}
\usepackage{hyperref}
\makeatletter
\def\hlinew#1{
	\noalign{\ifnum0=`}\fi\hrule \@height #1 \futurelet
	\reserved@a\@xhline}
\makeatother

\title{ Against Adversarial Learning: Naturally Distinguish Known and Unknown in Open Set Domain Adaptation }

\author{
	Sitong Mao \textsuperscript{\rm 1}, 
	Xiao Shen \textsuperscript{\rm 1}, 
	Fu-lai Chung \textsuperscript{\rm 1} \\
}

\affiliations{
	\textsuperscript{\rm 1} The Hong Kong Polytechnic University \\
	sitong.mao@connect.polyu.hk, xiao.shen@connect.polyu.hk, korris.chung@polyu.edu.hk
}

\begin{document}
\maketitle
\begin{abstract}
Open set domain adaptation refers to the scenario that the target domain contains categories that do not exist in the source domain. It is a more common situation in the reality compared with the typical closed set domain adaptation where the source domain and the target domain contain the same categories. The main difficulty of open set domain adaptation is that we need to distinguish which target data belongs to the unknown classes when machine learning models only have concepts about what they know. In this paper, we propose an ``against adversarial learning'' method that can distinguish unknown target data and known data naturally without setting any additional hyper parameters and the target data predicted to the known classes can be classified at the same time. Experimental results show that the proposed method can make significant improvement in performance compared with several state-of-the-art methods. 
\end{abstract}		

\section{Introduction}
Domain adaptation refers to the learning scenario that adapts a model to the unlabeled or a few labeled target data by exploiting the information of the labeled source data which is from different but related domain~\cite{farseev2017cross}~\cite{liu2008evigan}~\cite{mcclosky2006reranking}~\cite{daume2009frustratingly}~\cite{saenko2010adapting}~\cite{long2015learning}~\cite{long2016deep}~\cite{gong2012geodesic}~\cite{weston2012deep}~\cite{pan2011domain}~\cite{tzeng2014deep}. Due to domain shift, the model trained on source data cannot work well if directly applied on the target data. Therefore, it is important to develop domain adaptation methods to enhance the performance. Deep neural network has been proved having good transferability, especially the first few layers~\cite{yosinski2014transferable}. This inspires developing methods that combine domain adaptation with deep neural networks by reducing the distribution discrepancy between the deep features of the source data and the target data. One way to do so is to minimize a statistical metric that can reflect the distribution discrepancy~\cite{long2015learning}~\cite{long2017deep}~\cite{mao2018deep}.

Recently, adversarial learning~\cite{goodfellow2014generative} has been used for domain adaptation and achieved outstanding performance. The architecture of adversarial domain adaptation mainly consists of two parts: feature extractor and discriminator. The discriminator aims at distinguishing the target data from the source data, while the feature extractor tries to extract deep features that can confuse the discriminator. Theoretically, its global optimum\footnote{Proof is available in Appendix A.} achieves when deep features of the source data and deep features of the target data have the same distribution, so that the deep model trained on the source data can adapt to the target data. Some adversarial domain adaptation methods map source deep features and target deep features close without considering their label information~\cite{ganin2016domain}~\cite{tzeng2017adversarial}. To further improve the performance, tensor products of the features and the probabilities distributed over each category are used as the inputs of the discriminator~\cite{long2018conditional} to reduce the conditional distribution discrepancy \cite{zhang2019bridging} \cite{zhang2019domain}.

In the previous standard settings of the closed set domain adaptation, source data and target data consist of the same set of categories. However, this is not always true in the reality. In this paper, we focus on a more common situation proposed recently, i.e., open set domain adaptation~\cite{panareda2017open}~\cite{saito2018open}~\cite{liu2019separate}~\cite{lian2019known}~\cite{luo2017label}. Open set domain adaptation (OSDA) does not require source domain and target domain containing the same set of categories. Instead, the source domain and the target domain have a set of shared categories and a set of domain-specific categories. In this paper, we focus on the situation that the target domain includes all the categories of the source domain~\cite{saito2018open} and contains target-specific categories at the same time.


The difference of the OSDA compared with the standard closed set domain adaptation is that two tasks should be realized at the same time: 1) Reducing the distribution discrepancy of the shared classes between source domain and target domain; 2) Distinguishing the unknown target data from data belonging to the known categories. The main challenge is that when there exist target categories which are not included in the source domain, we cannot get any information of the boundary between the known and the unknown data from the source domain. Thus, previous standard domain adaptation methods which reduce the distribution discrepancy between all the source data and all the target data will cause negative transfer because of mapping features of unknown target data to known categories. To address this problem, several methods have been developed for the OSDA recently \cite{panareda2017open} \cite{saito2018open} \cite{liu2019separate}. However, in these methods, the boundary between the known data and the unknown data needs to be defined manually by using a threshold or relying on splitting the known source data to find the boundary.

This paper proposes to naturally classify known and unknown target data in OSDA by training against the adversarial learning (AAL). In the proposed method, we do not need to define the boundary between ``known'' and ``unknown'' by a preset threshold or manually define known source data and unknown source data. Instead, the proposed method treats all target data as unknown at first following the fact that all target labels are unavailable during the training process. A classifier is used to classify all target data to the unknown class, which is against the adversarial training process. The adversarial learning maps target data close to source data from different degrees according to the probability of the target sample belonging to the shared classes. Along with the training process, unknown target data can be distinguished and known target data can be classified simultaneously. The details of the proposed method will be given in Section 3. 

The main contributions of this paper are: 1) We propose a method that trains against adversarial learning which can naturally distinguish target data belonging to known classes and unknown classes without any manually defined thresholds or hyper parameters; 2) A dynamic classifier and a weighted adversarial training process are built to map the deep features properly; 3) The  proposed AAL achieves a new state-of-the-art level with remarkable performance on all $18$ tasks of two datasets.
   
\section{Related Work}

Different from closed set domain adaptation methods, it is undesirable to map all target data and all source data close in OSDA since there exists target data that does not belong to any category in the source domain. Thus, it is important to distinguish the unknown target data in OSDA. Some approaches label those target instances that potentially belong to the categories of interest presented in the source dataset through optimizing a statistical function. For example, Assign-and-Transform-Iteratively (ATI)~\cite{panareda2017open} maps the feature space of the source domain close to the target domain according to the distance between each target data and the mean values of all samples in each class of the source domain. Consequently, whether a target sample belongs to one of the known classes or the unknown class is predicted. In other methods that utilize adversarial training, a threshold is used. By exploiting adversarial learning, OSDA through Back Propagation (OSBP)~\cite{saito2018open} pre-defines a threshold for the generator to reject a target data as unknown or map it close to the known source data. Then in ``Separate to adapt'' (STA)~\cite{liu2019separate}, binary classifiers are trained on each category of the source data trying to make the network have a concept of boundary between known and unknown and then training another binary classifier on target data according to their similarity to source data. All these previous OSDA works need to manually define a threshold or relabeling the source data and building a set of additional classifiers to help find the boundary between known and unknown. 

In this paper, the proposed AAL directly trains a classifier with all the target data labeled as unknown at the beginning, which follows the fact that the labels of all target data are not available during the training time. By training against the adversarial learning architecture, the proposed method can assign the target data to known or unknown class naturally without any additional hyper parameters to artificially define the boundary between known data and unknown target data.

\begin{figure*}
	\centering
	\includegraphics[width=0.80\linewidth, height=0.32\linewidth]{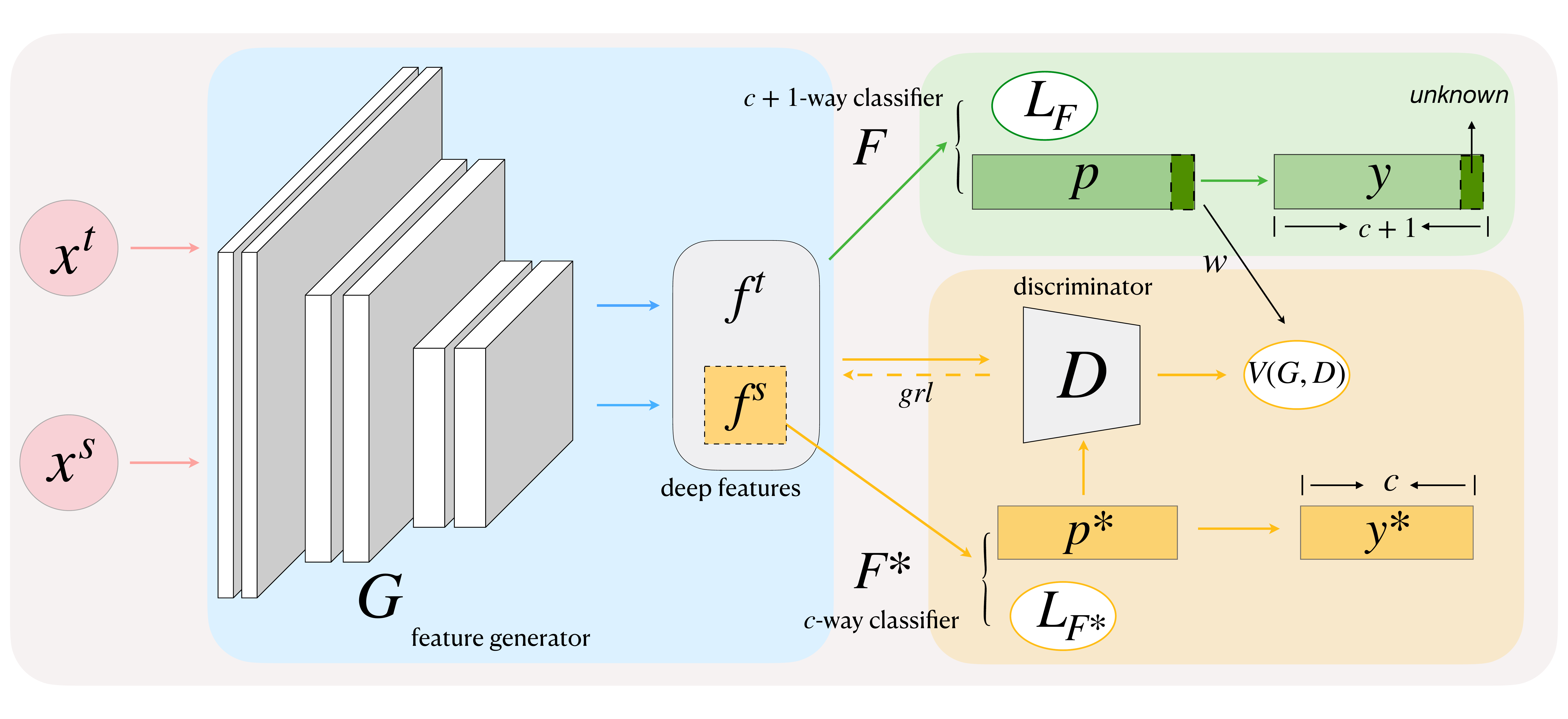} 
	\caption[LoF entry]{Illustration of the proposed OSDA method. $x^s$ denotes the source data and $x^t$ denotes the target data. $f^s$ and $f^t$ are the deep features of source data and target data extracted by the deep neural network $G$ respectively. $D$ is the discriminator used to distinguish which domain are the deep features belongs to. $L_d$ is the loss of $D$. Gradient reversal layer ($grl$) is used to back-propagate the gradients of $D$ to $G$~\cite{ganin2016domain}. $y^*=\{1,...,c\}$ is the label of the source data. $F^*$ denotes the $c$-way classifier which aims to classify the source data. Function $F$ is a $c+1$-way classifier which classifies the source data to their corresponding classes and all target data to class $c+1$ as labeled by $y=\{1,...,c+1\}$. The probabilities predicted by $F$ provide weights $w$ to $V(G,D)$.}\label{fig:osda} 
\end{figure*}

\section{Methodology}
In this section, we first explain the fundamental settings of OSDA. Then, the details of the proposed method are given.
\subsection{Open Set Domain Adaptation}
In the standard settings of unsupervised OSDA~\cite{saito2018open}, labeled source data $\mathcal{S}=\{X^{S},Y^{S}\}$ and unlabeled target data $\mathcal{T}=\{X^{T}\}$ are given in the training phase. Here $X^{S}=\{x^s_i|i=1,2,\dots,n_s\}$, where $x^s_i$ denotes the $i$th source data and $n_s$ is the amount of the source data. The labels of the source data are $Y^{S}=\{y^s_i | y^s_i \in C^S\}$, where $C^S = \{1,2,\dots,c\}$ and $c$ is the number of the source category. Similarly, $X^{T}=\{x^t_j|j=1,2,\dots,n_t\}$, where $n_t$ is the amount of the target data. The labels of the target data $Y^T = \{y^t_j | y^t_j \in C^T\}$ are unavailable during the training process. Different from the standard settings of the traditional domain adaptation problem where the source data and the target data contain the same set of categories, the OSDA this paper refers to is that the target domain contains more categories than the source domain, i.e., $C^S \subset C^T$. Target data belonging to the unknown classes $C^{T\backslash S}$ is denoted as $X^T_u$ and target data belonging to the known categories is denoted as $X^T_k$. We need to distinguish $X^T_u$ from $X^T_k$ and $X^S$ while classifying $X^T_k$ simultaneously. 

\subsection{Against Adversarial Learning}
The network architecture of the proposed AAL method is shown in Figure~\ref{fig:osda}. The weighted adversarial domain adaptation method and a $c+1$-way classifier are used. The weighted adversarial learning maps target data and source data close in varying degrees according to the probability of the target sample belonging to the shared classes. By training against the adversarial learning, the $c+1$-way classifier aims to assign unknown target data to the $(c+1)$th class which is different from the source data. Following this ``against adversarial learning'' manner, distinguishing unknown target data and classifying the known target data can be achieved at the same time. In the following part of this section, the details of the proposed method are presented.

\subsubsection{Dynamic $c+1$-way Classifier}
Following the fact that we do not have any label information of the target data during the training process, all target data is labeled as ``unknown'' at first in the proposed method, i.e., $\tilde{Y}^T=\{\tilde{y}^t_j|\tilde{y}^t_j = c+1, j=1,2, ...,n_t\}$. Then the whole dataset we have can be denoted as $\{X,Y\}$, where $X={X^S}\cup {X^T}=\{x_i, i=1,2,...,n_s+n_t\}$ and $Y={Y^S}\cup {\tilde{Y}^T}=\{y_i, i=1,2,...,n_s+n_t\}$. A $c+1$-way \textit{\textbf{one-layer(i.e., without any hidden layer)}} classifier $F$ is added directly after the feature extractor $G$ of the adversarial learning architecture as shown in Figure~\ref{fig:osda}. At the beginning of the training process, $F$ aims to classify source data to the first $c$ classes according to their ground truth labels and classify all target data to the unknown class which is labeled as $c+1$. With the training of the network in Figure \ref{fig:osda}, if the ground truth label $y^t_j$ of target data $x^t_j$ belongs to one of the target-specific classes (i.e., $x^t_j\in X^T_u$), then though the adversarial training process maps target and source close, $F$ can still classify $x^t_j$ to the unknown class $c+1$. Otherwise, if $x^t_j$ belongs to one of the known classes from $1$ to $c$ (i.e., $x^t_j \in X^T_k$), it will be predicted to one of the known classes by $F$ under the influence of the adversarial learning though $F$ tries to classify it to category $c+1$ at the beginning. According to the labels of the target data predicted by $F$, the loss function of the dynamic classifier $F$ will keep changing along with the training process. The loss function of $F$ is given in eq. \ref{eq:loss_F}.
\begin{equation}
\begin{array}{lr}
- \mathcal{L}_{F}  = \frac{\alpha_i}{\sum\limits_{i}^{(n_s+n_t)}\alpha_i}\sum\limits_{i}^{(n_s+n_t)} log({p}_{i, y_i}) \\
\left\{
\begin{array}{lr}
\alpha_i = 1,  & \text{if } i \leq n_s \\
\alpha_i = 1,  & \text{if } \hat{y}_i = c+1 \text{ and } i > n_s \\
\alpha_i = 0,  & \text{if } \hat{y}_i < c+1 \text{ and } i > n_s.
\end{array}
\right.
\end{array}
\label{eq:loss_F}
\end{equation}
Here ${p}_{i, y_i}$ is the probability of data ${x}_i$ predicted by $F$ belonging to its ground truth label ${y}_i \in {Y}$. $\hat{y}_i$ is the predicted label of ${x}_i \in {X}$ output by $F$.  $\alpha_i$ is the weight of the $i$th data of $X$. As shown in eq.~\ref{eq:loss_F}, when the label of $x^t_j$ predicted by $F$ belongs to one of the known $c$ classes, the weight of $x^t_j$ in the loss function of $F$ will be set to $0$, i.e., $F$ will no longer aim to classify $x^t_j$ to the unknown class. The weight for source data will always be $1$. The outputs of $F$ are regarded as the final predicted results of the target data.

Classifier $F^*$ is trained on the labeled source data only. The loss function is
\begin{equation}
- \mathcal{L}_{F^*}  = \frac{1}{n_s}\sum_{i}^{n_s} log(p^*_{i, y_i}) \\
\label{eq:loss_F*}
\end{equation}
, where $p^*_{i,y_i}$ is the probability of $x^s_i$ belonging to its ground truth label $y^s_i$ predicted by $F^*$.
%

\subsubsection{Weighted Adversarial Domain Adaptation}
Let us use $G^{f}(x^s)$ and $G^{f}(x^t)$ to denote the deep features of source data and target data extracted by $G$ respectively.  ${F^*}^{p}(G^{f}(x^s))$ and ${F^*}^{p}(G^{f}(x^t))$ are the softmax probabilities distributed over each known class of the source features and target features predicted by $F^*$ respectively. For source data, deep features $f$ extracted by $G$ and the probability vectors $p^*$ extracted by $F^*$ follow the distribution $G_s$, i.e., $(f,p^*)\sim G_s(f,p^*)$. $f$ and $p^*$ of target data follow the distribution $G_t$, i.e., $(f,p^*)\sim G_t(f,p^*)$. $D(\cdot)$ denotes the probability of a data belonging to the source domain. $\otimes$ denotes the tensor product operation. Then, the objective function of the traditional conditional adversarial learning~\cite{long2018conditional} is
\begin{equation}
\begin{aligned}
\min\limits_G & \max\limits_{D} V(G,D) \\
= & \mathbb{E}_{x^{s}\sim p_{s}(x^{s})}[logD(G^{f}(x^{s}) \otimes {F^*}^{p}(G^{f}(x^{s})))] \\
 & +  \mathbb{E}_{x^{t}\sim p_{t}(x^t)}[log(1-D(G^{f}(x^{t}) \otimes {F^*}^{p}(G^{f}(x^t))))] \\
 = & \mathbb{E}_{(f,p^*)\sim G_{s}(f,p^*)}[logD(f \otimes p^*)] \\
 & + \mathbb{E}_{(f,p^*)\sim G_{t}(f,p^*)}[log(1-D(f \otimes p^*))].
\end{aligned}
\label{eq:cdan}
\end{equation}
Under the settings of OSDA, we need to distinguish the unknown target data from the known classes. Hence, the unknown target data should be mapped far away from the source classes compared with the target data belonging to the known classes. Thus, in the proposed method, the adversarial learning process is weighted by $W$ according to the probability of each target data belonging to the known classes, as shown in eq.~\ref{eq:wada}. 
\begin{equation}
\begin{aligned}
\min\limits_G\max\limits_{D} & V(G,D) \\ 
= & \mathbb{E}_{(f,p^*)\sim G_{s}(f,p^*)}[logD(f \otimes p^*)] \\
& + \mathbb{E}_{(f,p^*)\sim G_{t}(f,p^*)}[Wlog(1-D(f \otimes p^*))] 
\end{aligned}
\label{eq:wada}
\end{equation}
Using $p^d_{s_i}$ and $p^d_{t_j}$ to denote the probabilities of $x^s_i$ and $x^t_j$ belonging to the source domain output by $D$ respectively. $W_j$ denotes the probability of the $j$th target belonging to known classes. Then V(G,D) eq. \ref{eq:wada} can be expressed as 
\begin{equation}
\frac{1}{n_s}\sum^{n_s}_{i}log(p^d_{s_i}) + \sum^{n_t}_{j} \frac{W_j}{\sum_{j}^{n_t} W_j} log(1-p^d_{t_j})
\label{eq:loss_d}
\end{equation}
Denote the probability of target data $x^t_j$ belonging to the unknown categories predicted by $F$ as $w^u_j$, then $W_j=1 - w^u_j$. In this way, target data which has smaller probability belonging to the unknown class will have larger $W_j$, i.e., the adversarial learning process prefers to map target data more likely belonging to the known categories close to the source data. In order to avoid gradient vanishing problem in eq. \ref{eq:loss_d}, we set all $W_j$ to $1$ in the case where all $(1-w^u_j)$ are zeros in a batch of size $bs$ as shown in eq.~\ref{eq:ad_weight}.
\begin{equation}
\left\{
\begin{array}{lr}
	W_j = 1-w^u_j, & \text{if } \sum\limits_{j}^{bs}(1 - w^u_j) \neq 0 \\
	W_j = 1, & \text{if } \sum\limits_{j}^{bs}(1 - w^u_j) = 0.
\end{array}
\right.
\label{eq:ad_weight}
\end{equation}
The integrated objective function of the proposed method is 
\begin{equation}
\begin{aligned}
\min_{G,F,F^*} & \mathcal{L}_F + \mathcal{L}_{F^*} + V(G,D)\\
& \max_D V(G,D).
\end{aligned}
\end{equation}
The overall training process is given in Algorithm~\ref{algo_1}. Here $max\_iter$ is the total number of iterations. 
{\renewcommand\baselinestretch{0.8}\selectfont
	\renewcommand{\algorithmicrequire}{\textbf{Input:}}
	\renewcommand{\algorithmicensure}{\textbf{Procedure:}}
	\begin{algorithm}
		\caption{Against Adversarial Learning}\label{algo_1}
			
			\KwIn{Source dataset $\mathcal{S}$; Target data $\mathcal{T}$; Initial target labels $\tilde{Y}^T=\{y^t_j | y^t_j = c+1\}$;}
			\For {$i=1:max\_iter$} {
			 Extract $G^f(x^s)$ and $G^f(x^t)$; \\
			 Extract $F^p(G^f(x^s))$ and ${F^*}^p(G^f(x^s))$; \\
			 Input $G^f(x^s)\otimes {F^*}^p(G^f(x^s))$ and $G^f(x^t)\otimes {F^*}^p(G^f(x^t))$ to $D$; \\
		    Get target labels $\hat{y}^t$ predicted by $F$; \\
			Compute \textcolor{gray}{$W$ for the adversarial learning process} in eq. \ref{eq:wada} according to $F^p(G^f(x^s))$; \\
			Compute \textcolor{gray}{$\alpha$ for classifier $F$} in eq. \ref{eq:loss_F} according to $\hat{y}^t$; \\
			Update parameters \textcolor{blue}{$\theta_g$ of $G$} by \textcolor{blue}{eq. \ref{eq:loss_F}, eq. \ref{eq:loss_F*}, and eq. \ref{eq:wada}}; \\
		    Update parameters \textcolor{blue}{$\theta_d$ of $D$} by \textcolor{blue}{eq. \ref{eq:wada}}; \\
		}
			 Test $F$ on the target data; \\
			\KwOut{Average accuracy ``OS*'' of known classes and average accuracy ``OS'' of all classes including the unknown class predicted by $F$.}
	\end{algorithm}
	\par}

\subsection{Analysis}
We analyze the significance of using a $c+1$-way classifier to classify target data and a $c$-way classifier in the adversarial architecture separately instead of using a $c+1$-way classifier in the adversarial architecture directly. In the proposed method, the $c+1$-way classifier aims to map the unknown target data away from the known classes while the adversarial learning process maps target data close to the source data according to their probabilities of belonging to the known classes. If a $c+1$-way classifier is directly used in the adversarial architecture to classify the target data to the unknown class $c+1$, the input of $D$ will be $f\otimes \tilde{p}$, where $\tilde{p}$ is the probability vector with $c+1$ elements. Denote $\tilde{p}(i)$ as the $i$th element of $\tilde{p}$. Then the gradients back propagated to $G$ from $V(G,D)$ will be
\begin{equation}
\frac{\partial V(G,D)}{\partial f} = \sum_i^{c+1}\tilde{p}(i) \cdot \nabla \theta_d,
\label{eq:grad_2}
\end{equation}
where $\nabla \theta_d$ is the gradients of the discriminator propagated to $f\otimes \tilde{p}$. Eq.~\ref{eq:grad_2} implies that the target data will be mapped closer to the source data which belongs to category $\arg\max\tilde{p}$. As the target data is labeled as $c+1$ at first, we have $\arg\max\tilde{p} = c+1$. Thus, the adversarial learning cannot map target data and source data close since the labels of the source data are from $1$ to $c$. As a result, if $c+1$-way classifier is used in the adversarial directly, all target data will be classified to the unknown class. Thus, it is significant to build a $c+1$-way classifier to distinguish unknown data and known data and a $c$-way classifier for the adversarial architecture separately.

In addition, optimum of the weighted adversarial learning will be achieved when $G_{s}(f,p^*) = WG_{t}(f,p^*)$. The proof is given in Theorem 2 of Appendix A.

\begin{table*}[t]
	\centering
	\caption[Lof entry]{Accuracy on Office-31 for unsupervised open set domain adaptation. \cite{liu2019separate}}
	\renewcommand\arraystretch{1.0}
	\small
	\begin{tabular}{p{1.3cm}<{\centering}p{0.3cm}<{\centering}p{0.3cm}<{\centering}p{0.3cm}<{\centering}p{0.3cm}<{\centering}p{0.3cm}<{\centering}p{0.3cm}<{\centering}p{0.3cm}<{\centering}p{0.3cm}<{\centering}p{0.3cm}<{\centering}p{0.3cm}<{\centering}p{0.3cm}<{\centering}p{0.3cm}<{\centering}p{0.3cm}<{\centering}p{0.3cm}<{\centering}p{0.3cm}<{\centering}p{0.3cm}<{\centering}p{0.3cm}<{\centering}p{0.3cm}<{\centering}p{0.3cm}<{\centering}p{0.3cm}<{\centering}p{0.3cm}<{\centering}}
		\hlinew{1.5pt}		
		\multirow{2}{*}{Method} &
		\multicolumn{3}{c}{ A$\rightarrow$D } & \multicolumn{3}{c}{ A$\rightarrow$W } & \multicolumn{3}{c}{ D$\rightarrow$A } & \multicolumn{3}{c}{ D$\rightarrow$W } & \multicolumn{3}{c}{ W$\rightarrow$A } & \multicolumn{3}{c}{ W$\rightarrow$D } & \multicolumn{3}{c}{ Avg.}  \cr \cline{2-22}
		& OS & OS* & UN & OS & OS* & UN & OS & OS* & UN & OS & OS* & UN & OS & OS* & UN & OS & OS* & UN & OS & OS* & UN \\	
		\hline
		ResNet50 & 85.2 & 85.5 & 82.2 & 82.5 & 82.7 & 80.5 & 71.6 & 71.5 & 72.6 & 94.1 & 94.3 & 92.1 &75.5 & 75.2 & 78.5 & 96.6 & 97.0 & 92.6 & 84.2 & 84.4 & 83.1 \\
		RTN & 89.5 & 90.1 & 83.5 & 85.6 & 88.1 & 60.6 & 72.3 & 72.8 & 67.3 & 94.8  & 96.2 & 80.8 & 73.5 & 73.9 & 69.5 & 97.1 & 98.7 & 81.1 & 85.4 & 86.8 & 73.8 \\
		DANN & 86.5 & 87.7 & 74.5 & 85.3 & 87.7 & 61.3 & 75.7 & 76.2 & 70.7 & 97.5 & 98.3 & 89.5 & 74.9 & 75.6 & 67.9 & 99.5 & 100 & 94.5 & 86.6 & 87.6 & 76.4 \\
		OpenMax & 87.1 & 88.4 & 74.1 & 87.4 & 87.5 & 86.4 & 83.4 & 82.1 & 96.4 & 96.1 & 96.2 & 95.1 & 82.8 & 82.8 & 82.8 & 98.4 & 98.5 & 97.4 & 89.0 & 89.3 & 88.7 \\
		ATI-$\lambda$ & 84.3 & 86.6 & 61.3 & 87.4 & 88.9 & 72.4 & 78.0 & 79.6 & 62.0 & 93.6 & 95.3 & 76.6 & 80.4 & 81.4 & 70.4 & 96.5 & 98.7 & 74.5 & 86.7 & 88.4 & 69.5 \\
		OSBP & 88.6 & 89.2 & 82.6 & 86.5 & 87.6 & 75.5 & 88.9 & 90.6 & 71.9 & 97.0 & 96.5 & 96.5 & 85.8 & 84.9 & 94.8 & 97.9 & 98.7 & 89.9 & 90.8 & 91.3 & 85.2 \\
		STA & 93.7 & 96.1 & 69.7 & 89.5 & 92.1 & 63.5 & 89.1 & 93.5 & 45.1 & 97.5 & 96.5 & 99.5 & 87.9 & 87.4 & 92.9 & 99.5 & 99.6 & 98.5 & 92.9 & 94.1 & 78.2 \\
		\hline
		AAL & \textbf{97.5} & \textbf{100} & 72.5 & \textbf{92.1} & \textbf{94.3} & 70.1 & \textbf{93.4} & \textbf{94.9} & 78.4 & \textbf{98.4} & \textbf{100} & 82.4 & \textbf{95.1} & \textbf{96.7} & 79.1 & \textbf{100} & \textbf{100} & 100 & \textbf{96.1} & \textbf{97.7} & 80.4 \\
		\hlinew{1.5pt}
	\end{tabular}
	\label{tab:office_resnet50}
\end{table*}

\begin{table*}[t]
	\setlength{\belowcaptionskip}{0.5cm}
	\centering
	\caption[Lof entry]{Accuracy on Office-Home for unsupervised open set domain adaptation. \cite{liu2019separate}}
	\renewcommand\arraystretch{1.0}
	\small
	\begin{tabular}{p{1.3cm}<{\centering}p{0.3cm}<{\centering}p{0.3cm}<{\centering}p{0.3cm}<{\centering}p{0.3cm}<{\centering}p{0.3cm}<{\centering}p{0.3cm}<{\centering}p{0.3cm}<{\centering}p{0.3cm}<{\centering}p{0.3cm}<{\centering}p{0.3cm}<{\centering}p{0.3cm}<{\centering}p{0.3cm}<{\centering}p{0.3cm}<{\centering}p{0.3cm}<{\centering}p{0.3cm}<{\centering}p{0.3cm}<{\centering}p{0.3cm}<{\centering}p{0.3cm}<{\centering}p{0.3cm}<{\centering}p{0.3cm}<{\centering}p{0.3cm}<{\centering}}
		\hlinew{1.5pt}
		\multirow{2}{*}{Method} &
		\multicolumn{3}{c}{ Ar$\rightarrow$Cl } & \multicolumn{3}{c}{ Ar$\rightarrow$Pr } & \multicolumn{3}{c}{ Ar$\rightarrow$Rw } & \multicolumn{3}{c}{ Cl$\rightarrow$Ar } & \multicolumn{3}{c}{ Cl$\rightarrow$Pr } & \multicolumn{3}{c}{ Cl$\rightarrow$Rw } & \multicolumn{3}{c}{ -- } \cr \cline{2-22}
		& OS & OS* & UN & OS & OS* & UN & OS & OS* & UN & OS & OS* & UN & OS & OS* & UN & OS & OS* & UN & -- & -- & -- \\	
		\hline
		ResNet50 & 53.4 & -- & -- & 69.3 & -- & -- & 78.7 & -- & -- & 61.4 & -- & -- & 61.8 & -- & -- & 71.0 & -- & -- & -- & -- & -- \\
		DANN & 54.6 & -- & -- & 69.5 & -- & -- & 80.2 & -- & -- & 61.9 & -- & -- & 63.5 & -- & -- & 71.7 & -- & -- & -- & -- & -- \\
		OpenMax & 56.5 & -- & -- & 69.1 & -- & -- & 80.3 & -- & -- & 64.1 & -- & -- & 64.8 & -- & -- & 73.0 & -- & -- & -- & -- & -- \\
		ATI-$\lambda$ & 55.2 & -- & -- & 69.1 & -- & -- & 79.2 & -- & -- & 61.7 & -- & -- & 63.5 & -- & -- & 72.9 & -- & -- & -- & -- & -- \\
		OSBP & 56.7 & -- & -- & 67.5 & -- & -- & 80.6 & -- & -- & 62.5 & -- & -- & 65.5 & -- & -- & 74.7 & -- & -- & -- & -- & -- \\
		STA & 58.1 & -- & -- & 71.6 & -- & -- & 85.0 & -- & -- & 63.4 & -- & -- & 69.3 & -- & -- & 75.8 & -- & -- & -- & -- & -- \\
		\hline
		AAL & \textbf{69.4} & \textbf{69.3} & 71.9 & \textbf{80.2} & \textbf{80.7} & 67.7 & \textbf{85.7} & \textbf{86.1} & 75.7 & \textbf{74.5} & \textbf{75.0} & 62.0 & \textbf{76.2} & \textbf{76.4} & 71.2 & \textbf{78.4} & \textbf{78.9} & 65.9 & -- & -- & -- \\
		\hline
		\hline
		\multirow{2}{*}{Method} &
		\multicolumn{3}{c}{ Pr$\rightarrow$Ar } & \multicolumn{3}{c}{ Pr$\rightarrow$Cl } & \multicolumn{3}{c}{ Pr$\rightarrow$Rw } & \multicolumn{3}{c}{ Rw$\rightarrow$Ar } & \multicolumn{3}{c}{ Rw$\rightarrow$Cl } & \multicolumn{3}{c}{ Rw$\rightarrow$Pr } & \multicolumn{3}{c}{ Avg. } \cr \cline{2-22}
		& OS & OS* & UN & OS & OS* & UN & OS & OS* & UN & OS & OS* & UN & OS & OS* & UN & OS & OS* & UN & OS & OS* & UN \\	
		\hline
		ResNet50 & 64.0 & -- & -- & 52.7 & -- & -- & 74.9 & -- & -- & 70.0 & -- & -- & 51.9 & -- & -- & 74.1 & -- & -- & 65.3 & -- & -- \\
		DANN & 63.3 & -- & -- & 49.7 & -- & -- & 74.2 & -- & -- & 71.3 & -- & -- & 51.9 & -- & -- & 72.9 & -- & -- & 65.4 & -- & -- \\
		OpenMax & 64.0 & -- & -- & 52.9 & -- & -- & 76.9 & -- & -- & 71.2 & -- & -- & 53.7 & -- & -- & 74.5 & -- & -- & 66.7 & -- & -- \\
		ATI-$\lambda$ & 64.5 & -- & -- & 52.6 & -- & -- & 75.8 & -- & -- & 70.7 & -- & -- & 53.5 & -- & -- & 74.1 & -- & -- & 66.1 & -- & -- \\
		OSBP & 64.8 & -- & -- & 51.5 & -- & -- & 71.5 & -- & -- & 69.3 & -- & -- & 49.2 & -- & -- & 74.0 & -- & -- & 65.7 & -- & -- \\
		STA & 65.2 & -- & -- & 53.1 & -- & -- & 80.8 & -- & -- & 74.9 & -- & -- & 54.4 & -- & -- & 81.9 & -- & -- & 69.5 & -- & -- \\
		\hline
		AAL & \textbf{71.1} & \textbf{71.3} & 66.1 & \textbf{64.3} & \textbf{64.1} & 69.3 & \textbf{83.1} & \textbf{83.5} & 73.1 & \textbf{79.6} & \textbf{80.0} & 69.6 & \textbf{64.2} & \textbf{64.0} & 69.2 & \textbf{84.7} & \textbf{84.9} & 79.7 & \textbf{76.0} & \textbf{76.2} & 71.0 \\
		\hlinew{1.5pt}
	\end{tabular}
	\label{tab:officehome_resnet50}
\end{table*}

\section{Experiments}
In this section, the experimental settings are firstly given. Then the proposed method is evaluated by comparing with several state-of-the-art methods. Some qualitative analysis is also given. The codes will be available in Github.

\subsection{Experimental Setup}
\textbf{1) Datasets} The \textbf{Office-31} dataset\cprotect\footnote{https://people.eecs.berkeley.edu/\Verb!~!jhoffman/domainadapt/\\ \Verb!#!datasets$\_$code}~\cite{saenko2010adapting} contains images originated from three domains: Amazon (A), Webcam (W), and DSLR (D). These three domains consist of the same 31 categories. Following the standard settings of the OSDA, the first 10 categories are set as the shared classes of the source and the target domain and class $21\sim 31$ are the target-specific unknown categories. By using Office-31, we can evaluate the proposed method on 6 transfer tasks: $W\rightarrow D$, $W\rightarrow A$, $D\rightarrow W$, $D\rightarrow A$, $A\rightarrow W$, and $A\rightarrow D$.

The \textbf{Office-Home}\footnote{http://hemanthdv.org/OfficeHome-Dataset/}~\cite{venkateswara2017deep} dataset consists of 4 domains: Artistic (Ar), Clip Art (Cl), Product (Pr) and Real-World (Rw). There are 65 categories in each domain and more than 15,000 images in total. The first 25 categories are set as the shared classes while the remaining classes are the target-specific unknown categories. Compared with Office-31, this is a more challenging dataset for domain adaptation evaluation because each domain contains more categories and images in each category have significant domain shifts. For this dataset, 12 transfer tasks can be generated for evaluation using all its 4 domains: $Cl\rightarrow Pr$, $Cl\rightarrow Rw$, $Cl\rightarrow Ar$, $Pr\rightarrow Cl$, $Pr\rightarrow Rw$, $Pr\rightarrow Ar$, $Rw\rightarrow Cl$, $Rw\rightarrow Pr$, $Rw\rightarrow Ar$, $Ar\rightarrow Cl$, $Ar\rightarrow Pr$, and $Ar\rightarrow Rw$.

\textbf{2) Network Architecture}
The feature extractor $G$ in our experiments was built based on the architecture of ResNet50~\cite{he2016deep}. To fairly compare with other adversarial methods, a bottleneck layer with size 256 is added before the last full-connected layer. The tensor products of the $softmax$ probabilities $p^*$ and the outputs of the $bottleneck$ layer are utilized as the inputs of $D$. The discriminator $D$ used in our experiments consists of three fully connected layers. The size of the first two layers are 1024 followed by ReLU activation layer and dropout layer while the dimension of final outputs is 1. The $c$-way classifier $F^*$ and the $c+1$-way classifier $F$ are one-layer classifiers, which means there are no hidden layers. The size of the weights of $F$ and $F^*$ are $256\times (c+1)$ and $256\times c$ respectively. 

\textbf{3) Training Process} 
Models in our experiments were trained using the framework \textbf{Pytorch} using GPU \textbf{Tesla V100 32G} on Linux system. Following the standard fine-tuning procedure, the learning rates of the first few layers were set to a small number to slightly tune the parameters initialized from the pre-trained model. The learning rate for the other layers like $bottleneck$ and the last fully connected layer can be set larger, typically 10 times that of the lower layers. We used the stochastic gradient descent (SGD) update strategy with a momentum of 0.9. The base learning rates of classifier $F$, $F^*$ and the adversarial learning architecture were both initialized to 0.0001 for all the tasks of Office-31. The base learning rate of $F$ for all tasks of Office-Home dataset were set to 0.001 while that of $F^*$ and the adversarial architecture was 0.0001. During the training process, the learning rate was changed by the following strategy: $base\_lr\times (1+\gamma \times iter)^{-power}$, where $power$ was set to 0.75 throughout all experiments and $iter$ is the current number of iterations. $\gamma=0.0001$ was used for $F$ on Office-Home while $\gamma=0.001$ is used for all the other cases. All components are trained at the same time since the proposed architecture is end-to-end. The max training iterations are set to $20000$. Following previous works \cite{panareda2017open} \cite{saito2018open} \cite{liu2019separate}, each task was run for $3$ times and the average accuracy is the final result.
\begin{figure*}[t]
	\centering
	\subfigure[$A\rightarrow D$]{
		\includegraphics[width=0.32\linewidth]{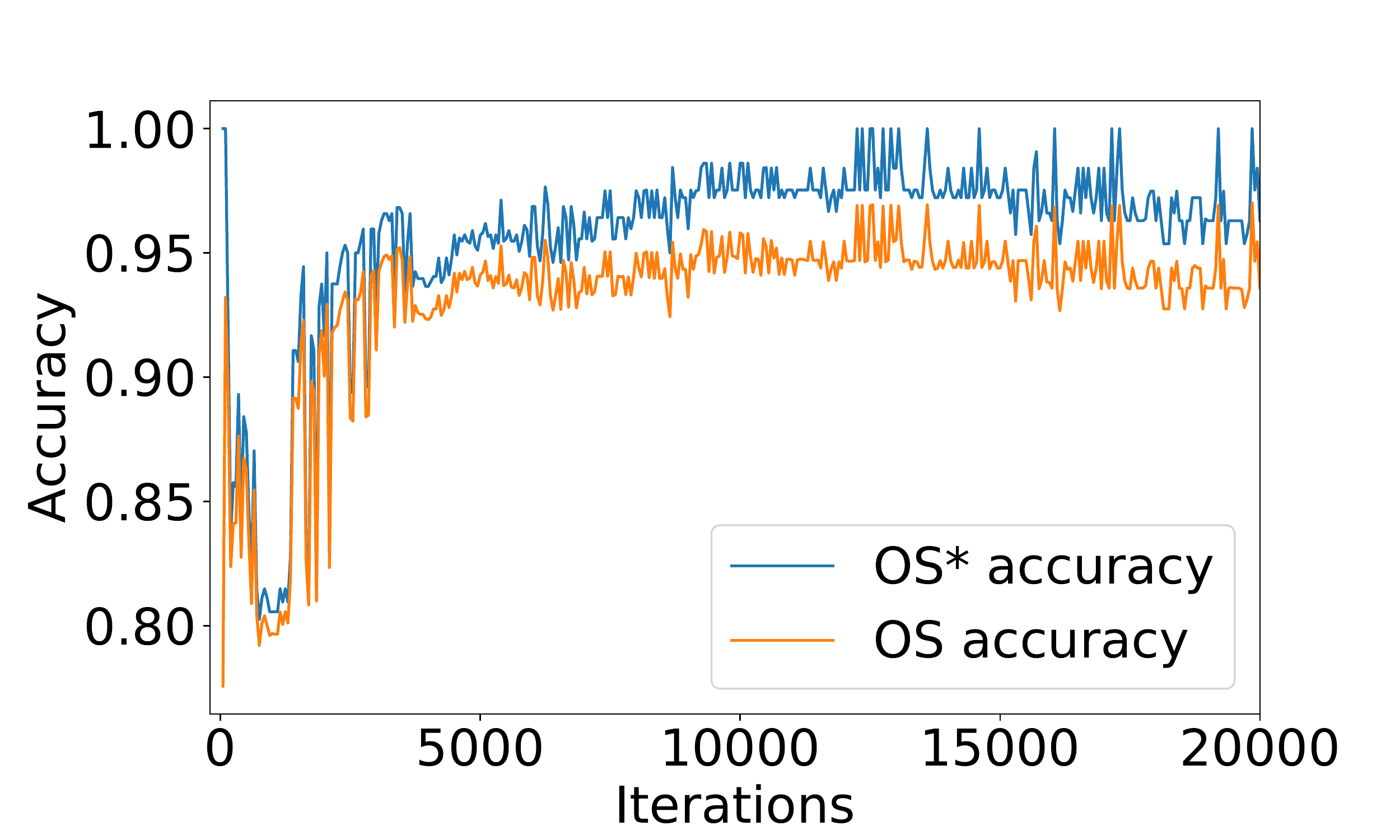}
	} 
	\subfigure[$Ar\rightarrow Cl$]{
		\includegraphics[width=0.32\linewidth]{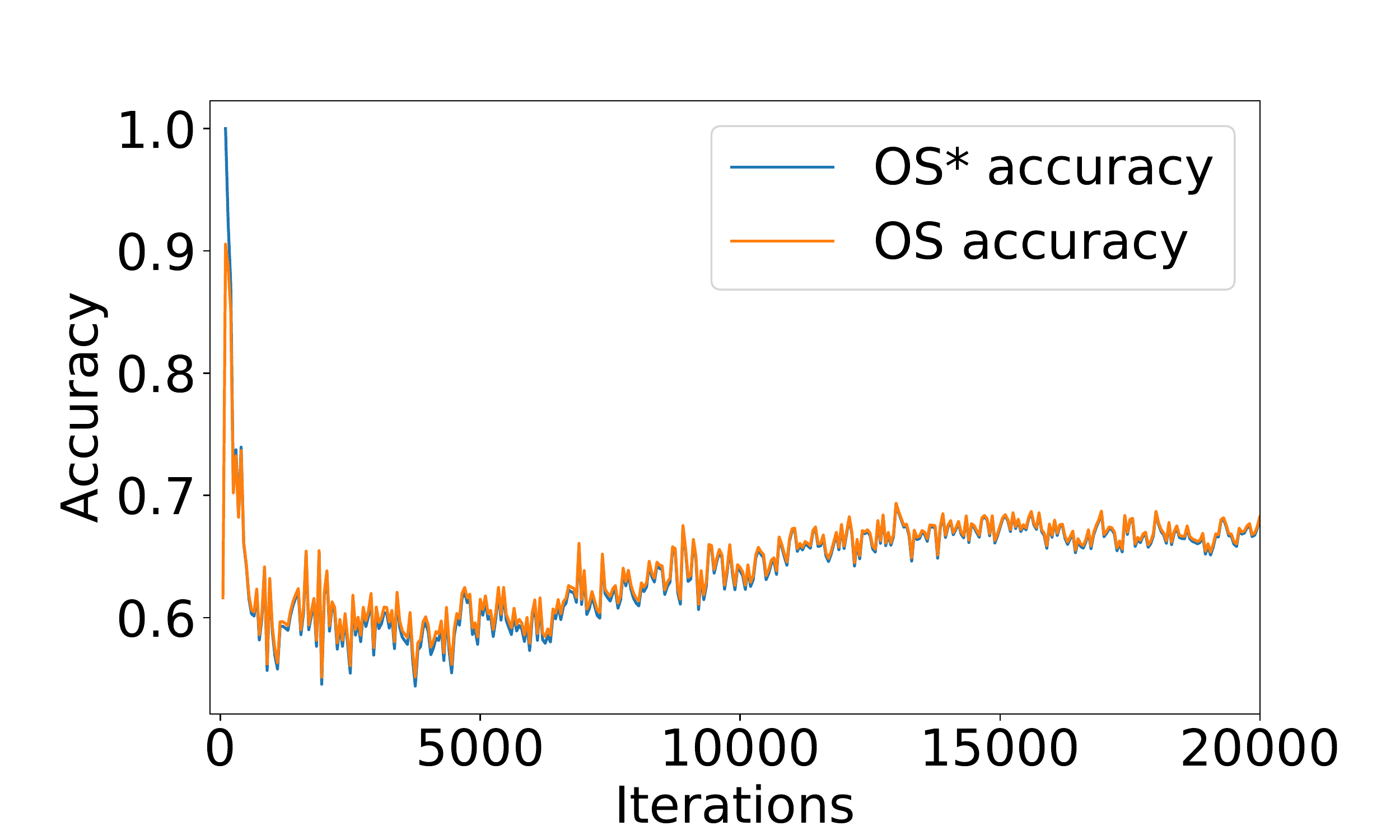}
	} 
	\subfigure[$Rw\rightarrow Pr$]{
		\includegraphics[width=0.32\linewidth]{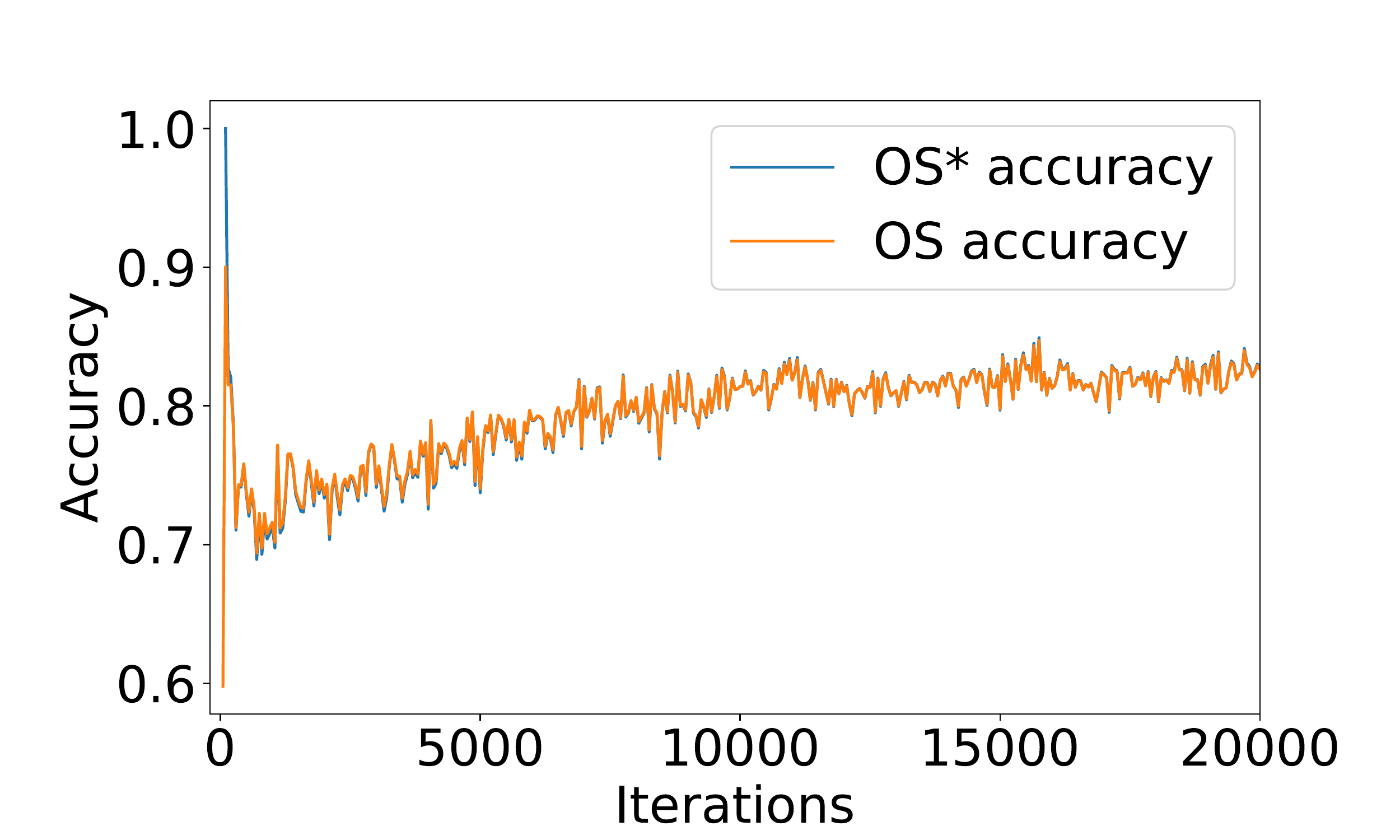}
	}
	
	\caption{(a), (b), and (c) are the accuracy curves of task A$\rightarrow$D, Ar$\rightarrow$Cl, and Rw$\rightarrow$Pr respectively. The orange curve denotes the accuracies evaluated by metric OS of each iteration. The blue curve denotes the accuracies evaluated by OS*.}\label{fig:acc}
\end{figure*}

\begin{figure}[t]
	\centering
	\subfigure[$A\rightarrow W$]{
		\includegraphics[width=0.47\linewidth, height=0.45\linewidth]{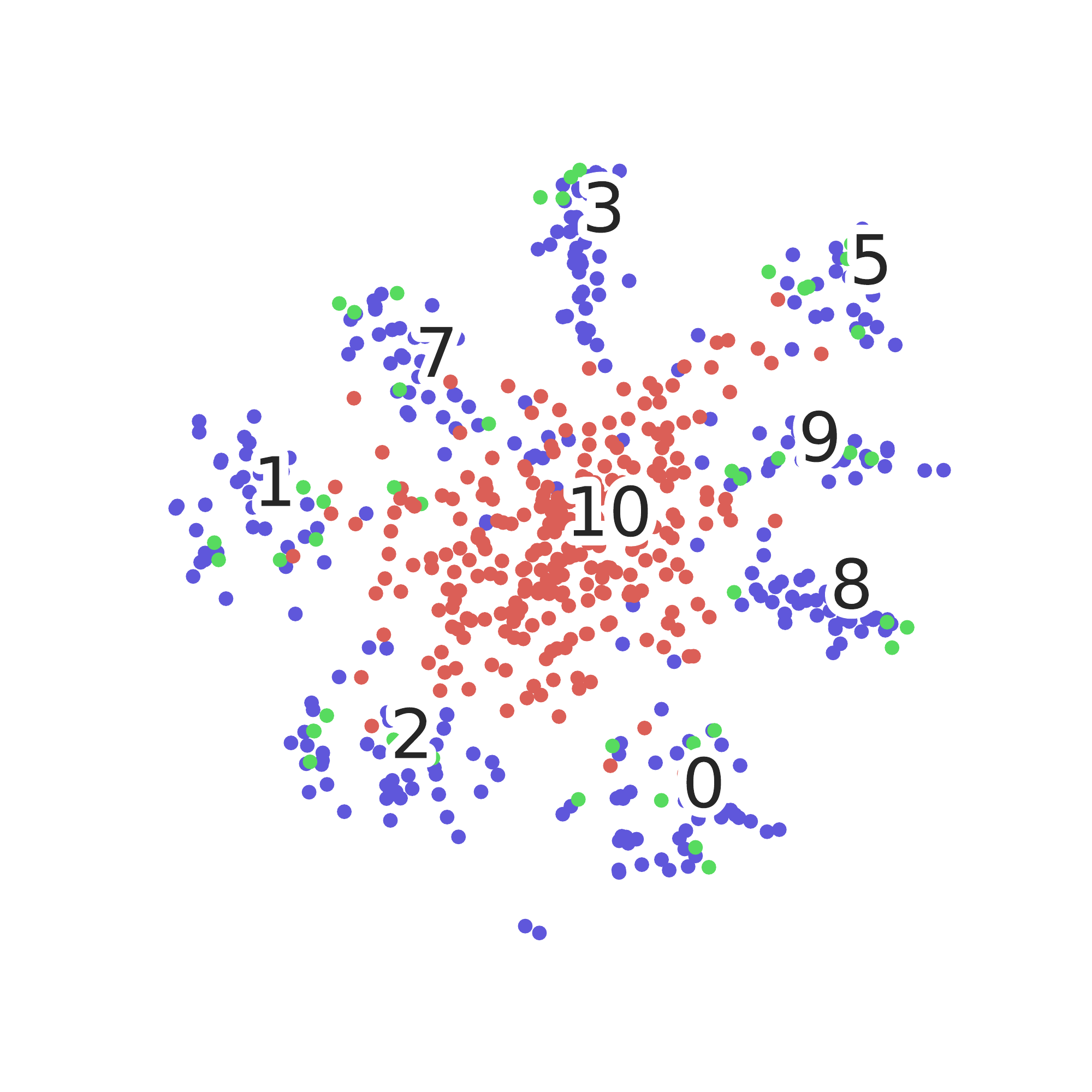}
	} 
	\subfigure[$A\rightarrow D$]{
		\includegraphics[width=0.47\linewidth, height=0.45\linewidth]{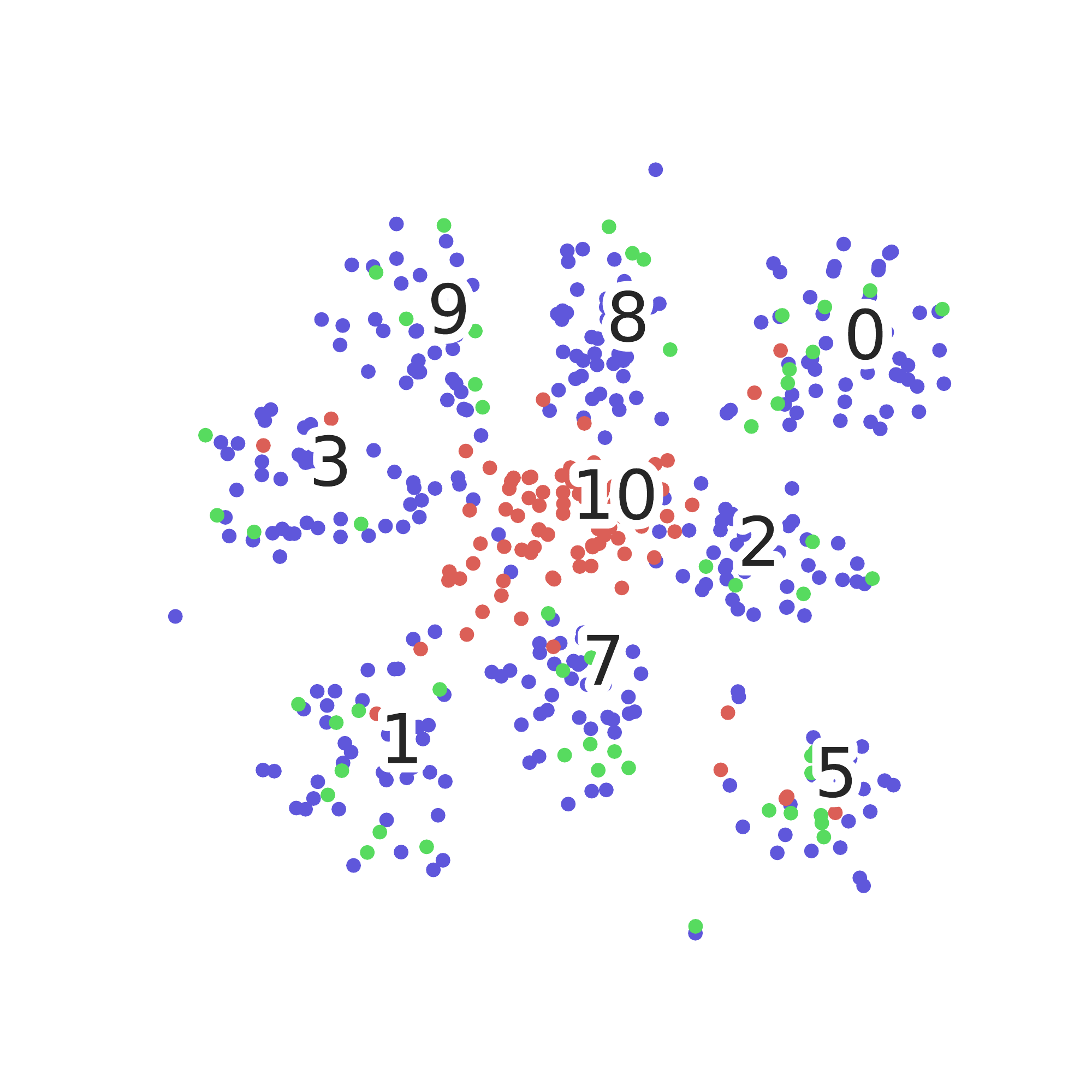}
	} 
	\caption{(a) and (b) are t-SNE embeddings of the deep features of task A$\rightarrow$W and A$\rightarrow$D respectively. Known target data is in green, source data is denoted in blue, and the unknown data is in red. }\label{fig:tsne}
\end{figure}

\subsection{Results}
To fairly compare with previous works, two metrics are used to evaluate the proposed method: OS and OS*. OS is the average accuracy for all $c+1$ classes and OS* is the average accuracy only on the $c$ known classes. Following the strategy of previous works \cite{liu2019separate}, in the case where there are no target data predicted to class $k$, then class $k$ will not be taken account when computing OS and OS*. Several previous domain adaptation methods are compared in our experiments. Baseline methods ResNet50 \cite{he2016deep}, RTN \cite{long2016unsupervised} and DANN \cite{ganin2016domain} are developed for closed set domain adaptation. By combining them with open SVM \cite{jain2014multi}, they can be used for OSDA. OpenMax \cite{bendale2016towards}, ATI-$\lambda$ \cite{panareda2017open}, OSBP \cite{saito2018open}, and STA \cite{liu2019separate} are currently state-of-the-art methods designed for OSDA.

We denote the proposed method as ``AAL''. The results obtained for the Office-31 are summarized in Table~\ref{tab:office_resnet50}. The best results are highlighted in bold. From Table~\ref{tab:office_resnet50}, we can observe that the proposed method can achieve significant improvement compared with the previous state-of-the-art methods in all the six tasks evaluated by both metrics OS and OS*. In some tasks like $A\rightarrow D$ and $D\rightarrow W$, the proposed ``AAL'' achieves an accuracy of $100\%$ for OS*, which indicates that it classified all target data predicted to the known classes correctly and no unknown target data were classified to the known classes. For task $W\rightarrow D$, the proposed ``AAL'' classified all target data to their corresponding classes including the unknown target data. 

The experimental results of the Office-Home dataset are recorded in Table~\ref{tab:officehome_resnet50}. As the results evaluated using OS* have not been reported in previous works, we only list results of OS for the previous state-of-the-art methods. For this more challenging domain adaptation dataset, the proposed method is still very distinctive in the domain adaptation performance, which shows the effectiveness of the proposed method. In some tasks, the proposed method can achieve an improvement of about  $10\%$, e.g., $Ar\rightarrow Cl$, $Cl\rightarrow Ar$, $Pr\rightarrow Cl$, $Rw\rightarrow Cl$. In addition, as shown in our experiments, methods specifically designed for OSDA (e.g., OpenMax, OSBP, STA) can achieve better performance than methods for closed set domain adaptation (e.g., RTN, DANN). This indicates that traditional methods are not suitable for solving OSDA problem since they tend to map all source features and target features close.

Besides standard measurements OS and OS* used in previous works, we also report the accuracy of the unknown class which is denoted as ``UN''  in Table \ref{tab:office_resnet50} and Table \ref{tab:officehome_resnet50}. When only a few target data which are most close to the source data are treated as known, OS and OS* will be high but UN will be low. On the contrary, if too many unknown target data are predicted to known classes, then UN will be high but OS and OS* will be low. To fairly compare with previous state-of-the-art methods, here we only use ``UN'' to reflect whether a model tries to classify all target data as unknown and thus achieves good performance in OS and OS*. We can observe that though OpenMax and OSBP achieve higher ``UN'' accuracies than ``AAL'', their OS and OS* accuracies are obviously lower than ``AAL'', which means too many unknown target data are predicted as known. As far as we know, we are the first to report accuracy ``UN''. From Table \ref{tab:office_resnet50} and Table \ref{tab:officehome_resnet50}, we can observe that our proposed method can make significant improvements to ``OS'' and ``OS*'' accuracies while maintaining relatively high accuracy for the unknown class at the same time, which further indicates the effectiveness of the proposed ``AAL'' method. In some tasks (e.g., Ar$\rightarrow$Cl etc.) of Table \ref{tab:officehome_resnet50}, UN accuracies are even higher than that of the known classes when their OS* accuracies has already been much higher that previous state-of-the-art methods.

\subsection{Further Analysis}
\subsubsection{Convergence}
OS and OS* accuracies of each iteration for task $A\rightarrow D$, $Ar\rightarrow Cl$, and $Rw\rightarrow Pr$ are plotted in Figure~\ref{fig:acc}. In the first few iterations, since target features and source features are far away, a lot of known target data are classified as unknown at this time. Thus the accuracy of the unknown class is small, which results in low OS accuracies. At the same time, only target data that are most close to the source data can be classified to known classes in the first few iterations, thus these target data are most likely to be correctly classified. Therefore, the accuracies given by OS* are extremely high at first ($100\%$ sometimes). Along with the increase of the training iterations, the deep network tend to be stable. More target data are classified correctly as a consequent. Thus, both OS and OS* output higher accuracies (more detailed analysis could be found in Appendix B). In Figure \ref{fig:acc} (a), it is normal that the accuracy of task A$\rightarrow$D fluctuates more widely. This is because that domain D contains a small amount of data, even a change of one predicted label can have a big impact.

\begin{table}[t]
	\setlength{\belowcaptionskip}{0.5cm}
	\centering
	\caption[LoF entry]{Ablation study of the proposed ``AAL'' without $W$ or $\alpha$.}
	\renewcommand\arraystretch{1.0}
	\small
	\begin{tabular}{p{0.8cm}<{\centering}p{0.6cm}<{\centering}p{0.6cm}<{\centering}p{0.6cm}<{\centering}p{0.6cm}<{\centering}p{0.6cm}<{\centering}p{0.6cm}<{\centering}p{0.6cm}<{\centering}}
		\hline
		\multirow{2}{*}{$W$} &
		\multirow{2}{*}{$\alpha$} & \multicolumn{3}{c}{ A$\rightarrow$W } & \multicolumn{3}{c}{ D$\rightarrow$A } \cr \cline{3-8} 
		& & OS & OS* & UN & OS & OS* & UN \\	
		\hline
		\ding{55} & \ding{55} & 94.8 & 100 & 42.8 & 71.2 & 72.6 & 57.2 \\
		\ding{55} & \ding{51} & 83.0 & 81.7 & 96.0 & 78.3 & 77.6 & 85.3 \\			
		\ding{51} & \ding{55} & 89.4 & 92.1 & 62.4 & 73.6 & 75.2 & 57.6 \\
		\ding{51} & \ding{51} & 92.1 & 94.3 & 70.1 & 93.4 & 94.9 & 78.4 \\
		\hline		
		\multirow{2}{*}{$W$} &
		\multirow{2}{*}{$\alpha$} & \multicolumn{3}{c}{ Ar$\rightarrow$Rw } & \multicolumn{3}{c}{ Cl$\rightarrow$Ar }\cr \cline{3-8} 
		& & OS & OS* & UN & OS & OS* & UN \\	
		\hline
		\ding{55} & \ding{55} & 82.6 & 83.4 & 62.6 & 74.6 & 75.4 & 54.6 \\
		\ding{55} & \ding{51} & 79.2 & 79.1 & 81.7 & 71.6 & 71.7 & 69.1 \\			
		\ding{51} & \ding{55} & 90.4 & 91.8 & 55.4 & 79.2 & 80.0 & 59.2 \\
		\ding{51} & \ding{51} & 85.7 & 86.1 & 75.7 & 74.5 & 75.0 & 62.0 \\
		\hline		
	\end{tabular}
	\label{tab:ablation}
\end{table}
\subsubsection{t-SNE}
To visually show the distributions of deep features extracted by the proposed method, we plot the t-SNE mappings of the extracted deep features in Figure~\ref{fig:tsne} for task $A\rightarrow W$ and $A\rightarrow D$. We select to plot features of $8$ known classes and the unknown class, the labels of which are annotated in Figure~\ref{fig:tsne}. The unknown class are displayed in red (class $10$ in Figure~\ref{fig:tsne} (a) and Figure~\ref{fig:tsne} (b)). From Figure~\ref{fig:tsne}, we can observe that the proposed AAL can map features of the target data belonging to unknown class differently from the known data. Two more examples selected from Office-Home dataset are displayed in Appendix C.

\subsubsection{Ablation Study}
The influence of different parts in the proposed method is evaluated in this section. The ablation study results are displayed in Table \ref{tab:ablation}. $W$ is the weight for adversarial learning and $\alpha$ denotes the weight for classifier $F$. A ``\ding{51}'' means the corresponding weight is included while a ``\ding{55}'' means that it is not applied. Here 4 transfer tasks are selected as examples, 2 from Office dataset (i.e., A$\rightarrow$W and D$\rightarrow$A) and 2 from Office-Home dataset (i.e., Ar$\rightarrow$Rw and Cl$\rightarrow$Ar). \textbf{Case 1): No $W$ and no $\alpha$.} From Table \ref{tab:ablation} we can observe that task A$\rightarrow$W seems to achieve better accuracies without $W$ and $\alpha$. However, a huge gap between OS accuracy and OS* accuracy indicates an extremely low accuracy for the unknown class ``UN'' ($42.8\%$), which means that a lot of known target data are classified as unknown, i.e., the network failed to recognize known target data. The situation for task Cl$\rightarrow$Ar is similar, which slightly improves the accuracies by sacrificing the ability of recognizing known target data. The other tasks have low accuracies for all OS, OS*, and UN. \textbf{Case 2): No $W$ but have $\alpha$}. Without $W$, the adversarial learning process will map all source and target close. At the same time, using $\alpha$ can help to map known target data and source data close, which can further classify more target data to known classes. Consequently, two many target data are classified to known classes which results in performance decline of OS and OS*. \textbf{Case 3): Have $W$ but no $\alpha$}. Classifier $F$ will always tries to classified all target data to the unknown class $c+1$. Thus, target features are far from those of the source, a lot of target features are misclassified to unknown class. From all these 3 cases, we can imply that a combination of $W$ and $\alpha$ can make the model perform well and stable for OS, OS*, and UN. Both $W$ and $\alpha$ are significant for the proposed method. The convergence curve when there are no $W$ and $\alpha$ is given in Appendix D.

\section{Conclusion}
In this paper, we present an open set domain adaptation approach called ``against adversarial learning'' (AAL) which can naturally distinguish unknown target data and known data without any additional hyper parameters. Following the fact that labels of all target data are unavailable during the training process, we consider all target data as unknown at the beginning. Then, by using a dynamic classifier training against a weighted adversarial learning architecture, the proposed method can effectively distinguish the unknown target data from the known classes and classify the known target data at the same time. Analyses from a variety of aspects are given. Experimental results show that the proposed method can achieve significant improvements compared with the previous state-of-the-art methods.

\cleardoublepage
\bibliography{osda}
\clearpage
\begin{appendices}
	\section{Appendix A. Proof of optimum}
	In this section, we give first prove that the optimum of the traditional conditional adversarial domain adaptation will be reached when conditional distribution of source data and target data are the same. Then, we will further give the proof for the optimum of the weighted adversarial learning.
	
	\begin{theorem}
		Given the objective function $V(G,D)$ in eq.~\ref{eq:cdan} and following the proof of ``Proposition 1'' in~\cite{goodfellow2014generative}, \textit{for any fixed $G$, the optimal discriminator $D$ in eq.~\ref{eq:cdan} is }
		\begin{equation}
			D^{*}_{G}(f \otimes p^*) = \frac{G_{s}(f, p^*)}{G_{s}(f,p^*) + G_{t}(f,p^*)}
			\label{eq:op_D}
		\end{equation}
	\end{theorem}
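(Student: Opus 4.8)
\textit{Proof proposal.} The plan is to adapt the proof of Proposition~1 in~\cite{goodfellow2014generative} to the present setting, where the discriminator's argument is the tensor product $f\otimes p^*$ rather than a raw feature. First I would rewrite the objective in eq.~\ref{eq:cdan} as a single integral. Since $(f,p^*)\sim G_s$ for the source and $(f,p^*)\sim G_t$ for the target, and since $D$ is evaluated at $f\otimes p^*$ in both terms,
\[
V(G,D)=\int\Big(G_s(f,p^*)\,\log D(f\otimes p^*)+G_t(f,p^*)\,\log\big(1-D(f\otimes p^*)\big)\Big)\,d(f,p^*).
\]
Because $p^*$ lies in the probability simplex, the map $(f,p^*)\mapsto f\otimes p^*$ is injective, so specifying $D$ on the range of this map is the same as specifying a function of $(f,p^*)$; hence, with $G$ fixed, maximizing $V(G,D)$ over measurable $D$ taking values in $[0,1]$ decouples into maximizing the integrand separately at each point.

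Next I would invoke the elementary fact used in the original GAN argument: for $a,b\ge 0$ not both zero, the scalar map $y\mapsto a\log y+b\log(1-y)$ on $(0,1)$ is strictly concave with derivative $a/y-b/(1-y)$, which vanishes uniquely at $y=a/(a+b)$; this is therefore its global maximizer on $[0,1]$. Applying this pointwise with $a=G_s(f,p^*)$ and $b=G_t(f,p^*)$ yields exactly eq.~\ref{eq:op_D}. On the (null) set where $G_s(f,p^*)=G_t(f,p^*)=0$ the integrand vanishes for every choice of $D$, so the value of $D^*_G$ there is immaterial and may be set arbitrarily; this matches the convention in~\cite{goodfellow2014generative} that $D^*_G$ need only be defined on $\mathrm{supp}(G_s)\cup\mathrm{supp}(G_t)$.

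I expect the only genuine subtlety to be the bookkeeping in the first step --- justifying the reduction to pointwise maximization over the tensor-product argument (injectivity of $(f,p^*)\mapsto f\otimes p^*$ on the simplex, and the fact that $D$ need not be pinned down off the combined support); the calculus step is immediate. The same scheme, with $G_t$ replaced by the reweighted density $W\,G_t$ in the integrand of eq.~\ref{eq:wada}, then gives the optimal discriminator $G_s/(G_s+W\,G_t)$ and, after substituting it back into $V(G,D)$, the stated optimum $G_s(f,p^*)=W\,G_t(f,p^*)$ of the weighted game referenced as Theorem~2.
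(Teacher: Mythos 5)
Your proposal is correct and follows essentially the same route as the paper: rewrite $V(G,D)$ as an integral against $G_s(f,p^*)$ and $G_t(f,p^*)$, then apply the pointwise scalar fact that $y\mapsto a\log y+b\log(1-y)$ is maximized at $a/(a+b)$, exactly as in Proposition~1 of Goodfellow et al. Your extra remarks on the injectivity of $(f,p^*)\mapsto f\otimes p^*$ and on the behaviour off the combined support are a welcome tightening of a step the paper leaves implicit, but they do not change the argument.
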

	
	\begin{proof}	
		For eq. \ref{eq:cdan}, given any fixed generator $G$, the discriminator $D$ is trained to maximize the value function $V(G,D)$:
		\begin{equation}
			\begin{aligned}
				\min\limits_G \max\limits_D & V(G,D) \\
				= & \int_{x^{s}} p_{s}(x^{s})[logD(G^{f}(x^{s}) \otimes {F^*}^{p}(x^{s}))]d_{x^s} \\
				+ & \int_{x^{t}} p_{t}(x^t)[log(1-D(G^{f}(x^{t}) \otimes {F^*}^{p}(x^t)))]d_{x^t} \\
				= & \int_{f}\int_{p^*} G_{s}(f,p^*) logD(f \otimes p^*) \\
				+ & G_{t}(f,p^*) log(1-D(f \otimes p^*)) d_f d_{p^*} \\
			\end{aligned}
			\label{eq:optimal_D}
		\end{equation}
		where $G_{s}(f,p^*) = (G^{f}(x^s),{F^*}^{p}(x^s))_{x^s \sim p_{s}(x^s)}$ and $G_{t}(f,p^*) = (G^{f}(x^t),{F^*}^{p}(x^t))_{x^t \sim p_{t}(x^t)}$. Eq.~\ref{eq:optimal_D} has the same form as function $y \rightarrow a\, log(y) + b\,log(1 - y), (a,b) \in \mathbb{R}^2 \setminus \{0,0\}$, which achieves its maximum at $\frac{a}{a + b} \in [0,1]$. So similarly, given $G$ fixed, the optimal $D$ that makes $V(G,D)$ achieve its maximum can be obtained as in eq.~\ref{eq:op_D}. 
	\end{proof}
	Then, by substituting eq. \ref{eq:op_D} into eq.~\ref{eq:cdan}, the training criterion for $G$ is to minimize
	
	\begin{equation}
		\begin{aligned}
			& V(G,D^{*}_{G}) \\
			& = \mathbb{E}_{x^{s}\sim p_{s}(x^{s})}[logD^{*}_{G}(G^{f}(x^{s}) \otimes {F^*}^{p}(x^{s}))] \\
			& + \mathbb{E}_{x^{t}\sim p_{t}(x^{t})}[log(1-D^{*}_{G}(G^{f}(x^{t}) \otimes {F^*}^{p}(x^t)))] \\
			& = \mathbb{E}_{(f,p^*) \sim G_{s}(f,p^*)}[logD^{*}_{G}(f \otimes p^*)] \\
			& + \mathbb{E}_{(f,p^*) \sim G_{t}(f,p^*)}[log(1-D^{*}_{G}(f \otimes p^*))] \\
			& = \mathbb{E}_{(f,p^*) \sim G_{s}(f,p^*)}[log\frac{G_{s}(f,p^*)}{G_{s}(f,p^*) + G_{t}(f,p^*)}] \\
			& + \mathbb{E}_{(f,p^*) \sim G_{t}(f,p^*)}[log\frac{G_{t}(f,p^*)}{G_{s}(f,p^*) + G_{t}(f,p^*)}] \\
		\end{aligned}
		\label{eq:min_G}
	\end{equation}
	According to~\cite{goodfellow2014generative}, it is straightforward to induce that eq.~\ref{eq:min_G} can be reformulated to
	\begin{equation}
		V(G,D^{*}_{G}) = -log(4) + 2\cdot JSD(G_{s}(f,p^*)\parallel G_{t}(f,p^*))
	\end{equation}
	We can see that when $G_{s}(f,p^*) = G_{t}(f,p^*)$, the global minimum can be achieved as the Jensen-Shannon divergence (JSD) between two distributions is always non-negative and equals to zero iff they are exactly the same. To sum up, in this adversarial architecture, the deep neural network $G$ tends to generate equally distributed probability-feature joint outputs for the target and source data:
	\begin{equation}
		\begin{aligned}
			p(G^{f}(x^s))p({F^*}^{p}(x^s)|G^{f}(x^s)) = \\
			p(G^{f}(x^t))p({F^*}^{p}(x^t)|G^{f}(x^t)).
		\end{aligned}
		\label{eq:distribution}
	\end{equation} 
	
	Then, the proof of the optimum for weighted adversarial domain adaptation eq. \ref{eq:wada} is given.
	\begin{theorem}
		Given the objective function $V(G,D)$ in eq.~\ref{eq:wada} and following the proof of ``Proposition 1'' in~\cite{goodfellow2014generative}, \textit{for any fixed $G$, the optimal discriminator $D$ in eq.~\ref{eq:wada} is }
		\begin{equation}
			D^{*}_{G}(f \otimes p^*) = \frac{G_{s}(f, p^*)}{G_{s}(f,p^*) + WG_{t}(f,p^*)}
			\label{eq:op_D_weighted}
		\end{equation}
	\end{theorem}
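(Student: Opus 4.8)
The plan is to mirror the proof of Theorem~1 almost verbatim, the only genuinely new ingredient being the bookkeeping of the weight $W$. First I would fix $G$ — and also regard the classifier $F$ that produces $W$ as fixed, since $W$ is built from $F$'s predicted unknown-class probability $w^u$ and therefore does not depend on $D$. I then rewrite the two expectations in eq.~\ref{eq:wada} as integrals against the joint feature--probability densities, exactly as in eq.~\ref{eq:optimal_D}: the source term becomes $\int_f\int_{p^*} G_s(f,p^*)\,\log D(f\otimes p^*)\,d_f\,d_{p^*}$, and the target term becomes $\int_f\int_{p^*} W\,G_t(f,p^*)\,\log\bigl(1-D(f\otimes p^*)\bigr)\,d_f\,d_{p^*}$. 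The key observation is that $W = 1-w^u$ is a deterministic function of the deep feature $f$ (because $w^u$ is the $(c+1)$-th coordinate of $F(G^f(x^t))$), so it may be carried inside the integral as a fixed, non-negative factor $W=W(f)$ that does not interact with the optimisation over $D$.

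Next I would combine the two integrands into the single pointwise expression $G_s(f,p^*)\,\log D(f\otimes p^*) + W\,G_t(f,p^*)\,\log\bigl(1-D(f\otimes p^*)\bigr)$ and invoke the same elementary fact used in Theorem~1: for $(a,b)\in\mathbb{R}^2\setminus\{0,0\}$ with $a,b\ge 0$, the map $y\mapsto a\log y + b\log(1-y)$ attains its maximum on $[0,1]$ at $y=\frac{a}{a+b}$. Applying this with $a=G_s(f,p^*)$ and $b=W\,G_t(f,p^*)$ — both non-negative since $W\in[0,1]$ — yields the claimed optimal discriminator $D^{*}_{G}(f\otimes p^*)=\frac{G_s(f,p^*)}{G_s(f,p^*)+W\,G_t(f,p^*)}$. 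Because this maximiser is optimal pointwise for every $(f,p^*)$, it is optimal for the integral, hence for $V(G,D)$.

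The part requiring the most care is not the calculus but the justification for pulling $W$ inside the integral and treating it as fixed: one has to argue that, with $G$ and $F$ held fixed, $W$ really is a function of $f$ alone and is bounded in $[0,1]$, so that the value function is still of the form $\int\bigl(a\log y + b\log(1-y)\bigr)$ with non-negative coefficients, and that the degenerate case $a=b=0$ — handled at the batch level in eq.~\ref{eq:ad_weight} by resetting $W$ to $1$ — does not affect the argmax on the support of $G_s$. Once the optimal $D$ is in hand, substituting it back into eq.~\ref{eq:wada} and repeating the manipulations of eq.~\ref{eq:min_G}, exactly as for Theorem~1, would express $V(G,D^{*}_{G})$ as a constant plus a weighted Jensen--Shannon-type divergence between $G_s(f,p^*)$ and $W\,G_t(f,p^*)$, whose minimum over $G$ is attained precisely when $G_s(f,p^*)=W\,G_t(f,p^*)$, recovering the claim stated in the Analysis section.
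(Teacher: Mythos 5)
Your proposal follows essentially the same route as the paper's own proof: rewrite the two expectations in eq.~\ref{eq:wada} as integrals over $(f,p^*)$, absorb $W$ into the target density as a fixed non-negative factor, and apply the pointwise maximization of $y \mapsto a\log y + b\log(1-y)$ at $y=\frac{a}{a+b}$ with $a=G_s(f,p^*)$ and $b=WG_t(f,p^*)$, exactly as in eq.~\ref{eq:optimal_D_weighted}. Your extra care in justifying that $W$ is a deterministic, $D$-independent function bounded in $[0,1]$ (and in flagging the degenerate $a=b=0$ case handled by eq.~\ref{eq:ad_weight}) is a welcome refinement of a step the paper takes for granted, but the argument is the same.
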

	
	\begin{proof}	
		For eq. \ref{eq:wada}, given any fixed generator $G$, the discriminator $D$ is trained to maximize the value function $V(G,D)$:
		\begin{equation}
			\begin{aligned}
				\min\limits_G \max\limits_D & V(G,D) \\
				= & \int_{x^{s}} p_{s}(x^{s})[logD(G^{f}(x^{s}) \otimes {F^*}^{p}(x^{s}))]d_{x^s} \\
				+ & \int_{x^{t}} p_{t}(x^t)[Wlog(1-D(G^{f}(x^{t}) \otimes {F^*}^{p}(x^t)))]d_{x^t} \\
				= & \int_{f}\int_{p^*} G_{s}(f,p^*) logD(f \otimes p^*) \\
				+ & WG_{t}(f,p^*) log(1-D(f \otimes p^*)) d_f d_{p^*} \\
			\end{aligned}
			\label{eq:optimal_D_weighted}
		\end{equation}
		where $G_{s}(f,p^*) = (G^{f}(x^s),{F^*}^{p}(x^s))_{x^s \sim p_{s}(x^s)}$ and $G_{t}(f,p^*) = (G^{f}(x^t),{F^*}^{p}(x^t))_{x^t \sim p_{t}(x^t)}$. Eq.~\ref{eq:optimal_D_weighted} has the same form as function $y \rightarrow a\, log(y) + b\,log(1 - y), (a,b) \in \mathbb{R}^2 \setminus \{0,0\}$, which achieves its maximum at $\frac{a}{a + b} \in [0,1]$. So similarly, given $G$ fixed, the optimal $D$ that makes $V(G,D)$ in eq. \ref{eq:wada} achieve its maximum can be obtained as in eq.~\ref{eq:op_D_weighted}. 
	\end{proof}
	Then, by substituting the optimal discriminator in eq. \ref{eq:op_D_weighted} into eq.~\ref{eq:wada}, the training criterion for $G$ is to minimize
	\begin{equation}
		\begin{aligned}
			& V(G,D^{*}_{G}) \\
			& = \mathbb{E}_{x^{s}\sim p_{s}(x^{s})}[logD^{*}_{G}(G^{f}(x^{s}) \otimes {F^*}^{p}(x^{s}))] \\
			& + \mathbb{E}_{x^{t}\sim p_{t}(x^{t})}[Wlog(1-D^{*}_{G}(G^{f}(x^{t}) \otimes {F^*}^{p}(x^t)))] \\
			& = \mathbb{E}_{(f,p^*) \sim G_{s}}[logD^{*}_{G}(f \otimes p^*)] \\
			& + \mathbb{E}_{(f,p^*) \sim G_{t}}[Wlog(1-D^{*}_{G}(f \otimes p^*))] \\
			& = \mathbb{E}_{(f,p^*) \sim G_{s}}[log\frac{G_{s}(f,p^*)}{G_{s}(f,p^*) + W G_{t}(f,p^*)}] \\
			& + \mathbb{E}_{(f,p^*) \sim G_{t}}[Wlog\frac{W G_{t}(f,p^*)}{G_{s}(f,p^*) +W G_{t}(f,p^*)}] \\
			& = -log(2) + \\
			& \mathbb{E}_{(f,p^*) \sim G_{s}}[log\frac{G_{s}(f,p^*)}{(G_{s}(f,p^*) + W G_{t}(f,p^*))/2}] \ \cdots \circled{1} \\
			& + (- log(2)) + \\
			& \mathbb{E}_{(f,p^*) \sim G_{t}}[Wlog\frac{W G_{t}(f,p^*)}{(G_{s}(f,p^*) +W G_{t}(f,p^*))/2}] \ \cdots \circled{2}\\
		\end{aligned}
		\label{eq:min_G_weighted}
	\end{equation}
	Since it is easy to induce that item \circled{1} in eq. \ref{eq:min_G_weighted} equals to $D_{KL}(G_{s}(f,p^*) || \frac{G_{s}(f,p^*) + W G_{t}(f,p^*)}{2})$, here we mainly focus on proving that item \circled{2} in eq. \ref{eq:min_G_weighted} equals to $D_{KL}(W G_{t}(f,p^*) || \frac{G_{s}(f,p^*) + W G_{t}(f,p^*)}{2})$.
	\begin{proof}
		\begin{equation}
			\begin{aligned}
				& D_{KL}(W G_{t}(f,p^*) || \frac{G_{s}(f,p^*) + W G_{t}(f,p^*)}{2}) \\
				& = \int_{f}\int_{p^*} WG_{t}(f,p^*) log\frac{2 W G_{t}(f,p^*)}{G_{s}(f,p^*) +W G_{t}(f,p^*)} d_f d_{p^*} \\
				& = \int_{f}\int_{p^*} G_{t}(f,p^*) (W log\frac{2 W G_{t}(f,p^*)}{G_{s}(f,p^*) +W G_{t}(f,p^*)}) d_f d_{p^*} \\
				& = \mathbb{E}_{(f,p^*) \sim G_{t}}[Wlog\frac{W G_{t}(f,p^*)}{(G_{s}(f,p^*) +W G_{t}(f,p^*))/2}]
			\end{aligned}
		\end{equation}
	\end{proof}
	Then it is straightforward that eq. \ref{eq:min_G_weighted} equals to
	\begin{equation}
		V(G,D^{*}_{G}) = -log(4) + 2\cdot JSD(G_{s}(f,p^*)\parallel WG_{t}(f,p^*))
	\end{equation}
	When $G_{s}(f,p^*) = WG_{t}(f,p^*)$, the global minimum can be achieved as the Jensen-Shannon divergence (JSD) between two distributions is always non-negative and equals to zero iff they are exactly the same. Then in the weighted adversarial architecture, the deep neural network $G$ tends to generate equally distributed probability-feature joint outputs for the weighted target data and source data
	\begin{equation}
		\begin{aligned}
			p(G^{f}(x^s))p({F^*}^{p}(x^s)|G^{f}(x^s)) = \\
			Wp(G^{f}(x^t))p({F^*}^{p}(x^t)|G^{f}(x^t)).
		\end{aligned}
		\label{eq:osda_optimum}
	\end{equation} 
	
	\section{Appendix B. Detailed analysis of convergence curve}
	From Figure~\ref{fig:acc}, we can observe that the curves of OS(orange) rise, then go down and finally rise steadily. The curves of OS*(blue) go down and then rise. The reason is explained as follows. \textbf{1) Stage I: Low OS accuracy and high OS* accuracy.} In the first few iterations, target features and source features are far away, thus only a very small amount of target data are classified to only a few known classes (i.e., only a few known classes are taken into account when computing OS and OS*). Thus, the accuracy of the unknown class can relevantly significantly influence the accuracy of OS. Since a lot of known target data are classified as unknown at this time, the accuracy of the unknown class is small, which results in low OS accuracies. At the same time, only target data that are most close to the source data can be classified to known classes in the first few iterations, thus these target data are most likely to be correctly classified because they distribute similar to the source data. Therefore, the accuracies given by OS* are extremely high at first ($100\%$ sometimes). \textbf{2) Stage II: OS accuracies rise.} With the increase of the training iteration, target features and source features are mapped closer and more target data are correctly predicted to more known classes. Thus the influence of unknown class reduces, which results in the growth of OS. \textbf{3) Stage III: OS and OS* accuracies decline.} As the target features and source features are closer to each other along with the training process but the network is not well trained for classification task at this time, accuracies of known classes decrease, which results in decline of OS and OS*. \textbf{4) Stage IV: OS and OS* accuracies rise steadily.} Along with the increase of the training iterations, the deep network tend to be stable. More target data are classified correctly as a consequent. Thus, both OS and OS* output higher accuracies.
	
	\section{Appendix C. t-SNE embedding examples of Office-Home}
	Here we plot the t-SNE mappings of the extracted deep features in Figure~\ref{fig:tsne} for task Cl$\rightarrow$Pr and Rw$\rightarrow$Pr of the Office-Home dataset. We select to plot features of $8$ known classes (class $0\sim 7$ in Figure \ref{fig:tsne_appendix}) and the unknown class, the labels of which are annotated in Figure~\ref{fig:tsne_appendix}. The unknown class are displayed in red (class $8$ in Figure~\ref{fig:tsne_appendix}). From Figure~\ref{fig:tsne_appendix}, we can observe that the proposed AAL can map features of the target data belonging to unknown class differently from the known data for Office-Home dataset.
	\begin{figure}[t]
		\centering
		\subfigure[Cl$\rightarrow$Pr]{
			\includegraphics[width=0.47\linewidth, height=0.47\linewidth]{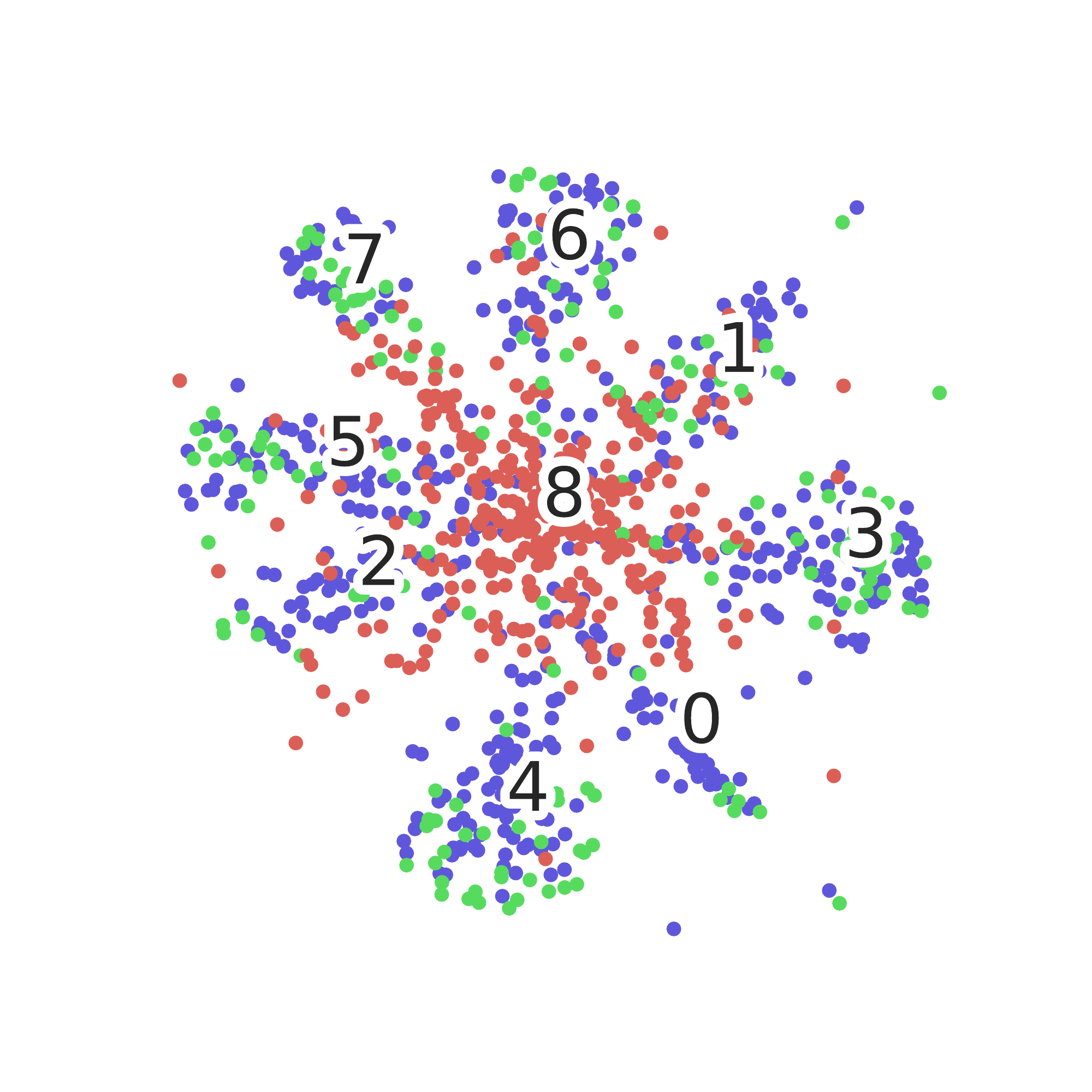}
		} 
		\subfigure[Rw$\rightarrow$Pr]{
			\includegraphics[width=0.47\linewidth, height=0.47\linewidth]{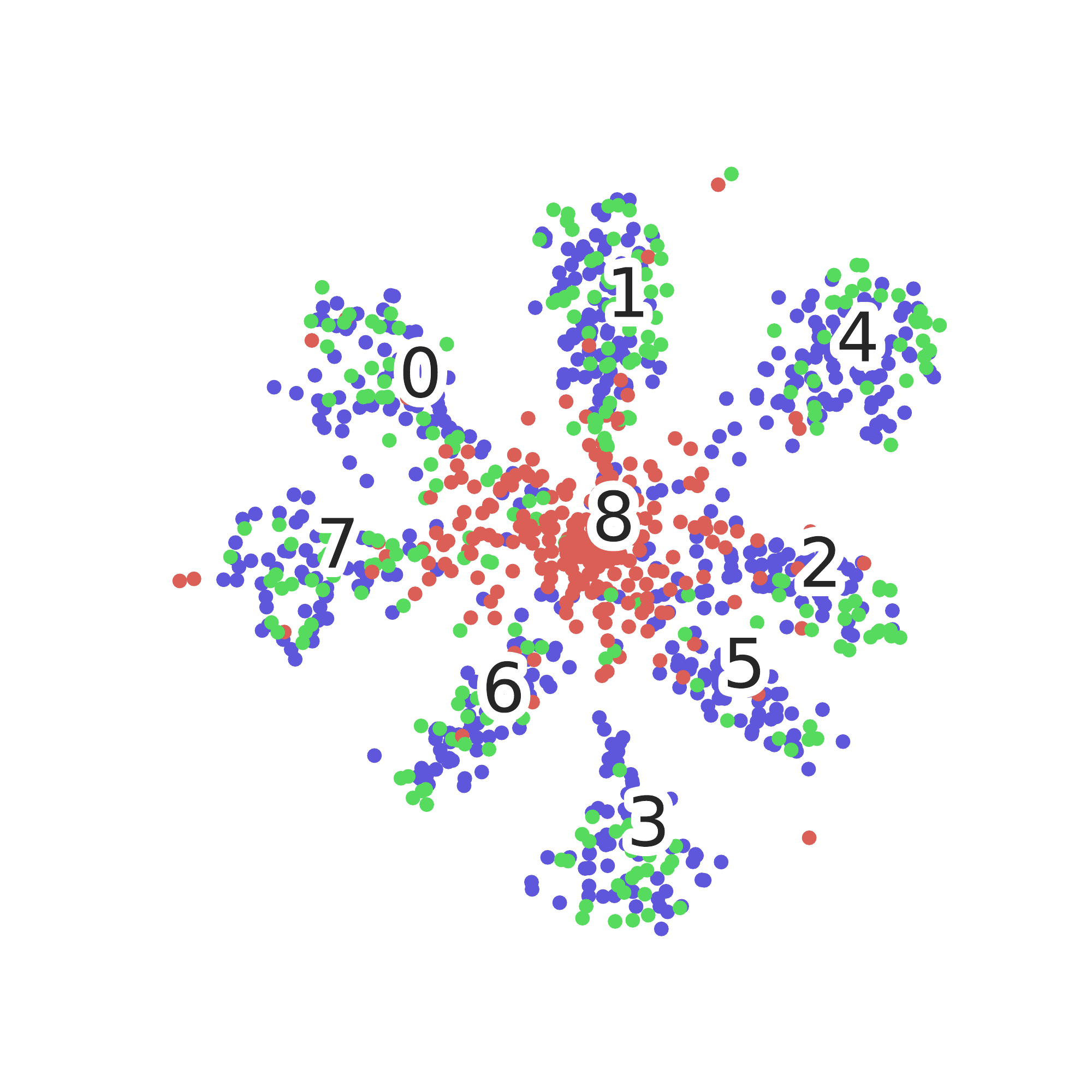}
		}
		\caption{(a), (b) are t-SNE mappings of the deep features of task $Cl\rightarrow Pr$ and $Rw\rightarrow Pr$ respectively. Known target data is denoted in green, source data is denoted in blue, and the unknown data is in red.}\label{fig:tsne_appendix}
	\end{figure}
	
	\section{Appendix D. Convergence when there are no $W$ and $\alpha$}
	In this section, the accuracies of each iteration without $W$ and $\alpha$ is plotted in Figure \ref{fig:acc_ablation}. Here task A$\rightarrow$W and D$\rightarrow$A are selected as examples. From Figure \ref{fig:acc_ablation}, we can observe that the training process is not stable and decline with the training process at last without $W$ and $\alpha$, which reveals the significance of utilizing $W$ and $\alpha$.
	\begin{figure}[h]
		\centering
		\subfigure[A$\rightarrow$W]{
			\includegraphics[width=0.85\linewidth]{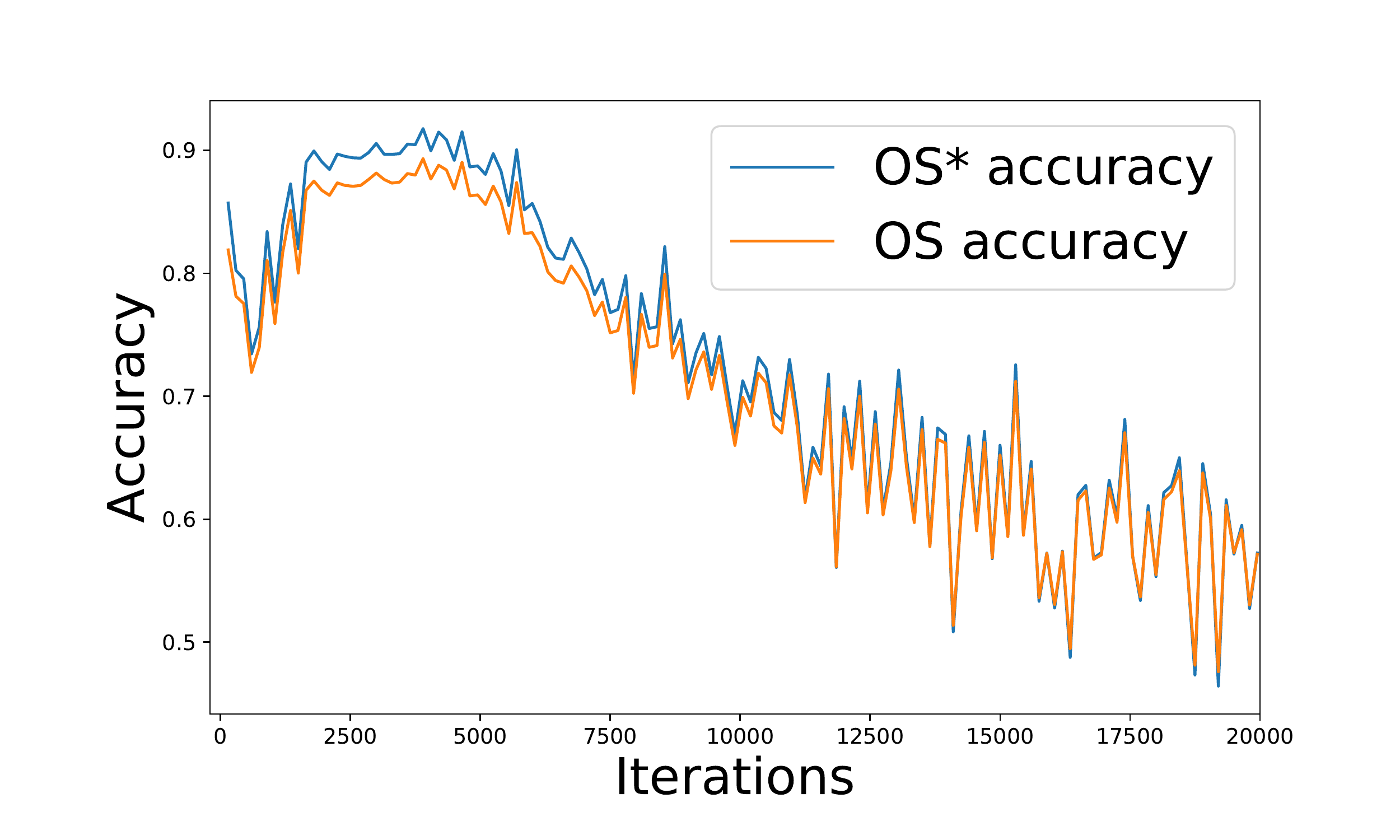}
		} \\
		\subfigure[D$\rightarrow$A]{
			\includegraphics[width=0.85\linewidth]{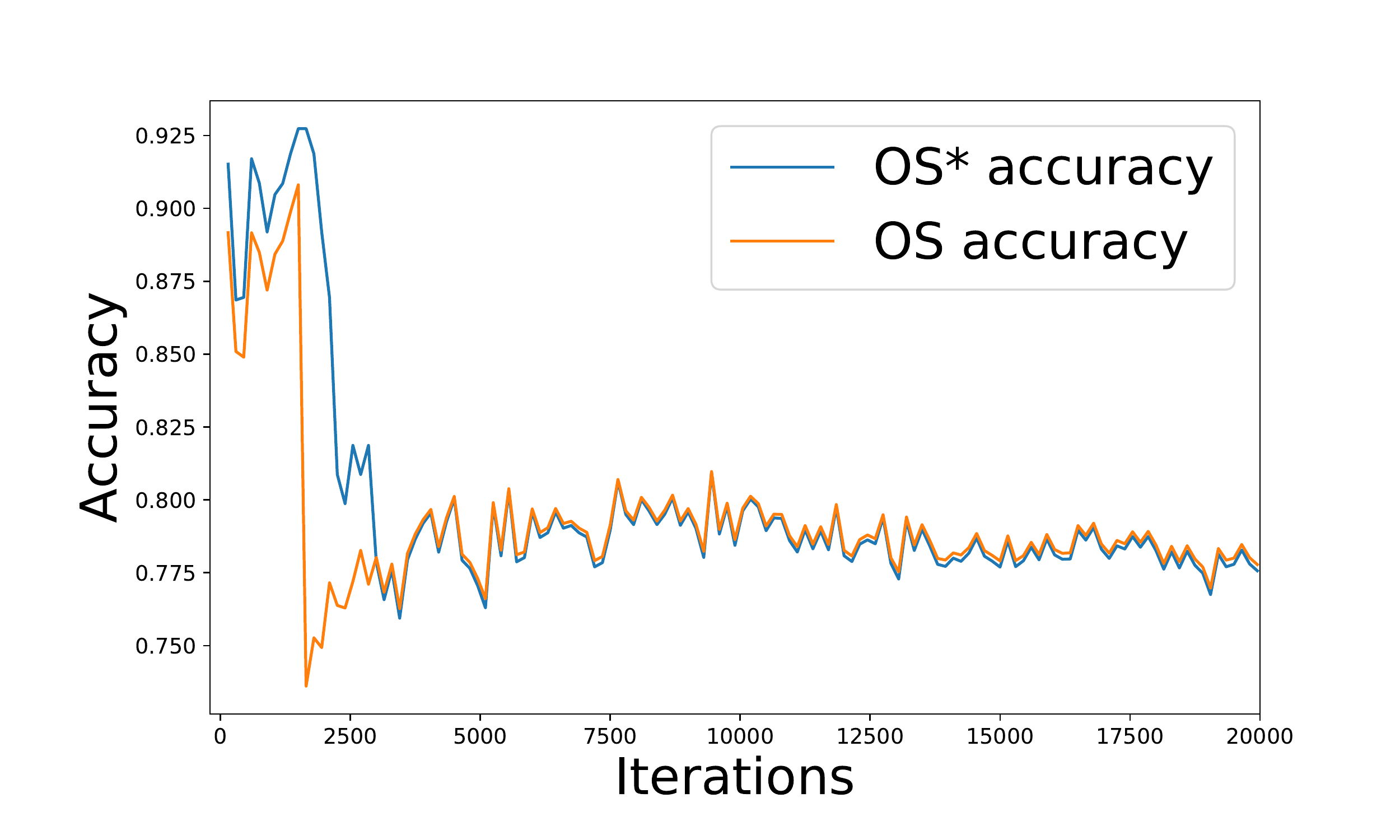}
		}
		\caption{Accuracies of each iteration for task A$\rightarrow$W and D$\rightarrow$A under the case that $W$ and $\alpha$ are not used.}
		\label{fig:acc_ablation}
	\end{figure}
\end{appendices}
\end{document}